\documentclass{article} % For LaTeX2e
\usepackage{iclr2018_conference,times}
\usepackage{hyperref}
\usepackage{url}
\usepackage{nth}
\usepackage{mathtools}
\usepackage{amsfonts}
\usepackage{mathrsfs}
\usepackage{relsize}
\usepackage{amsmath}
\usepackage{graphicx}
\usepackage{subcaption}
\usepackage{amsthm}
\usepackage{dsfont}
\usepackage{mwe}
\usepackage[utf8]{inputenc}
\usepackage[english]{babel}
\newtheorem{theorem}{Theorem}

\title{Flipout: Efficient Pseudo-Independent Weight Perturbations on Mini-Batches}

\author{Yeming Wen, Paul Vicol, Jimmy Ba\\
University of Toronto\\
Vector Institute\\
\texttt{wenyemin,pvicol,jba@cs.toronto.edu}
\And
Dustin Tran \\
Columbia University \\
Google \\
\texttt{trandustin@google.com}\\
\And
Roger Grosse \\
University of Toronto \\
Vector Institute\\
\texttt{rgrosse@cs.toronto.ca}
}

\theoremstyle{definition}
\newtheorem{observation}[theorem]{Observation}

\newcommand{\E}{\mathop{\mathbb{E}}}
\newcommand{\expect}{\E}
\newcommand{\Var}{\mathop\mathrm{Var}}
\newcommand{\Cov}{\mathop\mathrm{Cov}}
\newcommand{\normal}{\mathcal{N}}
\newcommand{\kldiv}{\mathrm{D}_{\mathrm{KL}}}
\newcommand{\klBars}{\,\|\,}
\newcommand{\given}{\,|\,}
\newcommand{\bigGiven}{\,\big|\,}
\newcommand{\transpose}{\top}

\newcommand{\data}{\mathcal{D}}
\newcommand{\networkFunction}{f}
\newcommand{\loss}{\mathcal{L}}
\newcommand{\targets}{y}
\newcommand{\distParams}{{\boldsymbol \theta}}
\newcommand{\meanWeights}{\overline{W}}
\newcommand{\weightPerturbation}{\Delta W}
\newcommand{\sharedPerturbation}{\widehat{\weightPerturbation}}
\newcommand{\stdGauss}{\epsilon}
\newcommand{\elbo}{\mathcal{F}}
\newcommand{\signMatrix}{E}

\newcommand{\signVecOne}{r}
\newcommand{\signVecTwo}{s}
\newcommand{\signMatOne}{R}
\newcommand{\signMatTwo}{S}

%% macro for notation
\newcommand{\batch}{\mathcal{B}}
\newcommand{\batchSize}{N}

\newcommand{\perturbation}{\weightPerturbation}

\newcommand{\noisyBatchGrad}{\mathcal{G}_\batch}

\iclrfinalcopy % Uncomment for camera-ready version

\begin{document}
\linespread{0.98}

\maketitle

\begin{abstract}
Stochastic neural net weights are used in a variety of contexts, including regularization, Bayesian neural nets, exploration in reinforcement learning, and evolution strategies. Unfortunately, due to the large number of weights, all the examples in a mini-batch typically share the same weight perturbation, thereby limiting the variance reduction effect of large mini-batches. We introduce flipout, an efficient method for decorrelating the gradients within a mini-batch by implicitly sampling pseudo-independent weight perturbations for each example. Empirically, flipout achieves the ideal linear variance reduction for fully connected networks, convolutional networks, and RNNs. We find significant speedups in training neural networks with multiplicative Gaussian perturbations.
We show that flipout is effective at regularizing LSTMs, and outperforms previous methods.
Flipout also enables us to vectorize evolution strategies: in our experiments, a single GPU with flipout can handle the same throughput as at least $40$ CPU cores using existing methods, equivalent to a factor-of-$4$ cost reduction on Amazon Web Services.
\end{abstract}

\section{Introduction}

Stochasticity is a key component of many modern neural net architectures and training algorithms. The most widely used regularization methods are based on randomly perturbing a network's computations \citep{JMLR:v15:srivastava14a,ioffe2015batch}. Bayesian neural nets can be trained with variational inference by perturbing the weights \citep{graves2011practical,blundell2015weight}. Weight noise was found to aid exploration in reinforcement learning \citep{plappert2017parameter,fortunato2017noisy}.
Evolution strategies (ES) minimizes a black-box objective by evaluating many weight perturbations in parallel, with impressive performance on robotic control tasks \citep{salimans2017evolution}.

Some methods perturb a network's activations \citep{JMLR:v15:srivastava14a,ioffe2015batch}, while others perturb its weights \citep{graves2011practical,blundell2015weight,plappert2017parameter,fortunato2017noisy,salimans2017evolution}. Stochastic weights are appealing in the context of regularization or exploration because they can be viewed as a form of posterior uncertainty about the parameters. However, compared with stochastic activations, they have a serious drawback: because a network typically has many more weights than units, it is very expensive to compute and store separate weight perturbations for every example in a mini-batch. Therefore, stochastic weight methods are typically done with a single sample per mini-batch. In contrast, activations are easy to sample independently for different training examples within a mini-batch. This allows the training algorithm to see orders of magnitude more perturbations in a given amount of time, and the variance of the stochastic gradients decays as $1/N$, where $N$ is the mini-batch size. We believe this is the main reason stochastic activations are far more prevalent than stochastic weights for neural net regularization. In other settings such as Bayesian neural nets and evolution strategies, one is forced to use weight perturbations and live with the resulting inefficiency.

In order to achieve the ideal $1/N$ variance reduction, the gradients within a mini-batch need not be independent, but merely uncorrelated. In this paper, we present flipout, an efficient method for decorrelating the gradients between different examples without biasing the gradient estimates.
Flipout applies to any perturbation distribution that factorizes by weight and is symmetric around 0---including DropConnect, multiplicative Gaussian perturbations, evolution strategies, and variational Bayesian neural nets---and to many architectures, including fully connected nets, convolutional nets, and RNNs.

In Section~\ref{sec:methods}, we show that flipout gives unbiased stochastic gradients, and discuss its efficient vectorized implementation which incurs only a factor-of-2 computational overhead compared with shared perturbations. We then analyze the asymptotics of gradient variance with and without flipout, demonstrating strictly reduced variance. In Section~\ref{sec:experiments}, we measure the variance reduction effects on a variety of architectures. Empirically, flipout gives the ideal $1/N$ variance reduction in all architectures we have investigated, just as if the perturbations were done fully independently for each training example.
We demonstrate speedups in training time in a large batch regime.
We also use flipout to regularize the recurrent connections in LSTMs, and show that it outperforms methods based on dropout.
Finally, we use flipout to vectorize evolution strategies \citep{salimans2017evolution}, allowing a single GPU to handle the same throughput as 40 CPU cores using existing approaches; this corresponds to a factor-of-4 cost reduction on Amazon Web Services.

\section{Background}
\label{bc}

\subsection{Weight perturbations}
\label{sec:weightperturb}
We use the term ``weight perturbation'' to refer to a class of methods which sample the weights of a neural network stochastically at training time. More precisely, let $\networkFunction(x, W)$ denote the output of a network with weights $W$ on input $x$. The weights are sampled from a distribution $q_\distParams$ parameterized by $\distParams$. We aim to minimize the expected loss $\expect_{(x, y) \sim \data, W \sim q_\distParams}[\loss(\networkFunction(x, W), \targets)]$, where $\loss$ is a loss function, and $\data$ denotes the data distribution. The distribution $q_\distParams$ can often be described in terms of perturbations: $W = \meanWeights + \weightPerturbation$, where $\meanWeights$ are the mean weights (typically represented explicitly as part of $\distParams$) and $\weightPerturbation$ is a stochastic perturbation. We now give some specific examples of weight perturbations.

{\bf Gaussian perturbations.} If the entries $\weightPerturbation_{ij}$  are sampled independently from Gaussian distributions with variance $\sigma^2_{ij}$, this corresponds to the distribution $W_{ij} \sim \normal(\meanWeights_{ij}, \sigma^2_{ij})$. Using the reparameterization trick \citep{kingma2013auto}, this can be rewritten as $W_{ij} = \meanWeights_{ij} + \sigma_{ij} \stdGauss_{ij}$, where $\stdGauss_{ij} \sim \normal(0, 1)$; this representation allows the gradients to be computed using backprop. A variant of this is \emph{multiplicative} Gaussian perturbation, where the perturbations are scaled according to the weights: $W_{ij} \sim \normal(\meanWeights_{ij}, \sigma^2_{ij} \meanWeights_{ij}^2)$, or $W_{ij} = \meanWeights_{ij}(1 + \sigma_{ij} \stdGauss_{ij})$, where again $\stdGauss_{ij} \sim \normal(0, 1)$. Multiplicative perturbations can be more effective than additive ones because the information content of the weights is the same regardless of their scale.

{\bf DropConnect.} DropConnect \citep{wan2013regularization} is a regularization method inspired by dropout \citep{JMLR:v15:srivastava14a} which randomly zeros out a random subset of the weights. In the case of a 50\% drop rate, this can be thought of as a weight perturbation where $\meanWeights = W/2$ and each entry $\weightPerturbation_{ij}$ is sampled uniformly from $\pm \meanWeights_{ij}$.

{\bf Variational Bayesian neural nets.} Rather than fitting a point estimate of a neural net's weights, one can adopt the Bayesian approach of putting a prior distribution $p(W)$ over the weights and approximating the posterior distribution $p(W|\data) \propto p(W) p(\data|W)$, where $\data$ denotes the observed data. \citet{graves2011practical} observed that one could fit an approximation $q_\distParams(W) \approx p(W|\data)$ using variational inference; in particular, one could maximize the evidence lower bound (ELBO) with respect to $\distParams$:
\[ \elbo(\distParams) = \expect_{W \sim q_\distParams}[\log p(\data \given W)] - \kldiv(q_\distParams \klBars p). \]
The negation of the second term can be viewed as the description length of the data, and the negation of the first term can be viewed as the description length of the weights \citep{hinton1993keeping}. \citet{graves2011practical} observed that if $q$ is chosen to be a factorial Gaussian, sampling from $\distParams$ can be thought of as Gaussian weight perturbation where the variance is adapted to maximize $\elbo$. 
\citet{blundell2015weight} later combined this insight with the reparameterization trick \citep{kingma2013auto} to derive unbiased stochastic estimates of the gradient of $\elbo$.

{\bf Evolution strategies.}
ES \citep{eigen1973ingo} is a family of black box optimization algorithms which use weight perturbations to search for model parameters.
ES was recently proposed as an alternative reinforcement learning algorithm \citep{schmidhuber2007training,salimans2017evolution}.
In each iteration, ES generates a collection of weight perturbations as candidates and evaluates each according to a fitness function $F$.
The gradient of the parameters can be estimated from the fitness function evaluations.
ES is highly parallelizable, because perturbations can be generated and evaluated independently by different workers.
Suppose $M$ is the number of workers, $\meanWeights$ is the model parameter, $\sigma$ is the standard deviation of the perturbations, $\alpha$ is the learning rate, $F$ is the objective function, and $\perturbation_m$ is the Gaussian noise generated at worker $m$.
The ES algorithm tries to maximize $\E_{\perturbation} \left[F\left(\meanWeights + \sigma\perturbation\right)\right]$. The gradient of the objective function and the update rule can be given as:
\begin{equation} \label{eqn:ES}
\begin{split}
\nabla_{\mathsmaller{\mathsmaller{\mathsmaller{\meanWeights}}}} \E_{\perturbation} \left[ F(\meanWeights + \perturbation)\right] &= \frac{1}{\sigma^2} \E_{\perturbation} \left[ \perturbation F(\meanWeights + \perturbation) \right], \quad \text{where } \perturbation \sim \mathcal{N}(0,\sigma I) \\ 
\implies \meanWeights_{t+1} &= \meanWeights_t + \alpha \frac{1}{M\sigma^2} \sum_{m=1}^M F(\meanWeights_t + \perturbation_m)\perturbation_m
 \end{split}
\end{equation}

\subsection{Local Reparameterization Trick}

In some cases, it's possible to reformulate weight perturbations as activation perturbations, thereby allowing them to be efficiently computed fully independently for different examples in a mini-batch.
In particular, \citet{kingma2015variational} showed that for fully connected networks with no weight sharing, unbiased stochastic gradients could be computed without explicit weight perturbations using the local reparameterization trick (LRT).
For example, suppose $X$ is the input mini-batch, $W$ is the weight matrix and $B=XW$ is the matrix of activations. The LRT samples the activations $B$ rather than the weights $W$. In the case of a Gaussian posterior, the LRT is given by:
\begin{equation} \label{eqn:LRT}
\begin{split}
q_{\theta}(W_{i,j}) &= \normal(\mu_{i,j},\sigma^2_{i,j}) \quad \forall W_{i,j}\in W \implies q_{\theta}(b_{m,j}\vert X)=\normal (\gamma_{m,j},\delta_{m,j}) \\
\gamma_{m,j} &= \sum_{i=1} x_{m,i}\mu_{i,j}, \quad \text{and} \quad \delta_{m,j}=\sum_{i=1}x_{m,i}^2\sigma_{i,j}^2,
\end{split}
\end{equation}
where $b_{m,j}$ denotes the perturbed activations. While the exact LRT applies only to fully connected networks with no weight sharing, \citet{kingma2015variational} also introduced variational dropout, a regularization method inspired by the LRT which performs well empirically even for architectures the LRT does not apply to.

\subsection{Other related work}
\label{related}

Control variates are another general class of strategies for variance reduction, both for black-box optimization \citep{williams1992simple,ranganath2014black,mnih2014neural} and for gradient-based optimization \citep{roeder2017stickingNips,miller2017reducing,louizos2017bayesian}. Control variates are complementary to flipout, so one could potentially combine these techniques to achieve a larger variance reduction. We also note that the fastfood transform \citep{le2013fastfood} is based on similar mathematical techniques. However, whereas fastfood is used to approximately multiply by a large Gaussian matrix, flipout preserves the random matrix's distribution and instead decorrelates the gradients between different samples.

\section{Methods}
\label{sec:methods}
\label{flipout}
As described above, weight perturbation algorithms suffer from high variance of the gradient estimates because all training examples in a mini-batch share the same perturbation. More precisely, sharing the perturbation induces correlations between the gradients, implying that the variance can't be eliminated by averaging.
In this section, we introduce flipout, an efficient way to perturb the weights quasi-independently within a mini-batch. 

\subsection{Flipout} 
\label{sec3.1}

We make two assumptions about the weight distribution $q_\distParams$: (1) the perturbations of different weights are independent; and (2) the perturbation distribution is symmetric around zero. These are nontrivial constraints, but they encompass important use cases: independent Gaussian perturbations (e.g.~as used in variational BNNs and ES) and DropConnect with drop probability 0.5. We observe that, under these assumptions, the perturbation distribution is invariant to elementwise multiplication by a random sign matrix (i.e.~a matrix whose entries are $\pm 1$). In the following, we denote elementwise multiplication by $\circ$. 

\begin{observation}
\label{thm:identical-distribution}
Let $q_\distParams$ be a perturbation distribution that satisfies the above assumptions, and let $\sharedPerturbation \sim q_\distParams$. Let $\signMatrix$ be a random sign matrix that is independent of $\sharedPerturbation$. Then $\weightPerturbation = \sharedPerturbation \circ \signMatrix$ is identically distributed to $\sharedPerturbation$. Furthermore, the loss gradients computed using $\weightPerturbation$ are identically distributed to those computed using $\sharedPerturbation$.
\end{observation}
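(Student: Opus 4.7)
The plan is to reduce the claim to a one-dimensional statement about each weight entry, exploit the two assumptions (independence across entries and symmetry about zero), and then lift the distributional equality from perturbations to gradients by a standard measure-theoretic argument.

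First I would use assumption (1), that the entries of $\sharedPerturbation$ are independent, to factor the joint distribution of $\sharedPerturbation$ as a product of marginals $q_{ij}$ over entries $(i,j)$. Since the entries of $\signMatrix$ are independent of one another and of $\sharedPerturbation$, the joint distribution of $\weightPerturbation = \sharedPerturbation \circ \signMatrix$ also factorizes across $(i,j)$. So it suffices to verify that, for each $(i,j)$, the single entry $\sharedPerturbation_{ij}\signMatrix_{ij}$ has the same marginal as $\sharedPerturbation_{ij}$.

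Next I would establish the scalar fact: if $X \sim q_{ij}$ is symmetric about $0$ and $S$ is an independent uniform $\pm 1$ variable, then $SX \stackrel{d}{=} X$. This is immediate by conditioning on $S$: with probability $1/2$ we have $SX = X$, and with probability $1/2$ we have $SX = -X \stackrel{d}{=} X$ by assumption (2). Combining with the previous paragraph gives $\weightPerturbation \stackrel{d}{=} \sharedPerturbation$, which is the first half of the observation.

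For the second half, I would observe that for fixed data $(x,y)$ the loss gradient is a deterministic (measurable) function $g$ of the weight perturbation, namely $g(\weightPerturbation) = \nabla_W \loss(\networkFunction(x,\meanWeights + \weightPerturbation), \targets)$. Equal-in-distribution inputs to a measurable function yield equal-in-distribution outputs, so $g(\weightPerturbation) \stackrel{d}{=} g(\sharedPerturbation)$; averaging this over $(x,y) \sim \data$ preserves the equality. There is no real obstacle here — the whole argument is essentially a two-line symmetrization — but the one step worth stating carefully is the factorization, since it is what turns a global symmetry condition into a per-entry statement and is exactly where assumption (1) is used.
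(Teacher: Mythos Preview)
Your argument is essentially the natural justification, and the paper itself does not supply a proof of this observation --- it states the fact and moves on --- so there is no alternative route to compare against.

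There is, however, one small step that needs tightening. You write that ``the entries of $\signMatrix$ are independent of one another,'' and use this to conclude that the joint law of $\weightPerturbation$ factorizes across $(i,j)$. The observation's hypotheses only say that $\signMatrix$ is independent of $\sharedPerturbation$; nothing is assumed about dependence among the entries of $\signMatrix$ themselves. And in the actual flipout construction the sign matrix is rank one, $\signMatrix_n = \signVecOne_n \signVecTwo_n^\top$, so its entries are certainly \emph{not} mutually independent. Your factorization step therefore invokes a premise that is neither stated nor true in the intended application.

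The fix is immediate and does not change the spirit of your proof: condition on $\signMatrix = e$ first. For any fixed sign matrix $e$, the map $\sharedPerturbation \mapsto \sharedPerturbation \circ e$ sends each entry $\sharedPerturbation_{ij}$ to $e_{ij}\,\sharedPerturbation_{ij}$; by assumption~(1) these entries remain independent, and by assumption~(2) each marginal is preserved under the sign flip. Hence $\weightPerturbation \mid (\signMatrix = e) \stackrel{d}{=} \sharedPerturbation$ for every realization $e$, which gives $\weightPerturbation \stackrel{d}{=} \sharedPerturbation$ unconditionally. Your second half --- that equal-in-distribution perturbations push forward to equal-in-distribution gradients via a measurable map --- is fine as stated.
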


Flipout exploits this fact by using a base perturbation $\sharedPerturbation$ shared by all examples in the mini-batch, and multiplies it by a different rank-one sign matrix for each example:
\begin{equation}\label{eqn:flipoutnoise}
\weightPerturbation_n = \sharedPerturbation \circ \signVecOne_n \signVecTwo_n^{\transpose},
\end{equation} 
where the subscript denotes the index within the mini-batch, and $\signVecOne_n$ and $\signVecTwo_n$ are random vectors whose entries are sampled uniformly from $\pm 1$. According to Observation~\ref{thm:identical-distribution}, the marginal distribution over gradients computed for individual training examples will be identical to the distribution computed using shared weight perturbations. Consequently, flipout yields an unbiased estimator for the loss gradients. However, by decorrelating the gradients between different training examples, we can achieve much lower variance updates when averaging over a mini-batch.

{\bf Vectorization.} The advantage of flipout over explicit perturbations is that computations on a mini-batch can be written in terms of matrix multiplications. This enables efficient implementations on GPUs and modern accelerators such as the Tensor Processing Unit (TPU) \citep{46078}. Let $x$ denote the activations in one layer of a neural net. The next layer's activations are given by:
\begin{align*}
y_n &= \phi \left( W^\transpose x_n \right) \\
&= \phi \left( \left(\meanWeights + \sharedPerturbation \circ \signVecOne_n \signVecTwo_n^\transpose \right)^\transpose x_n \right) \\
&= \phi \left( \meanWeights^\transpose x_n + \left(\sharedPerturbation^\transpose (x_n \circ \signVecTwo_n) \right) \circ \signVecOne_n \right),
\end{align*}
where $\phi$ denotes the activation function. To vectorize these computations, we define matrices $\signMatOne$ and $\signMatTwo$ whose rows correspond to the random sign vectors $\signVecOne_n$ and $\signVecTwo_n$ for all examples in the mini-batch.
The above equation is vectorized as:
\begin{equation}
Y = \phi \left( X \meanWeights + \left( (X \circ \signMatTwo) \sharedPerturbation \right) \circ \signMatOne \right). \label{eqn:vectorized}
\end{equation}
This defines the forward pass. Because $\signMatOne$ and $\signMatTwo$ are sampled independently of $\meanWeights$ and $\sharedPerturbation$, we can backpropagate through Eqn.~\ref{eqn:vectorized} to obtain derivatives with respect to $\meanWeights$, $\sharedPerturbation$, and $X$. 

{\bf Computational cost.} In general, the most expensive operation in the forward pass is matrix multiplication. Flipout's forward pass requires two matrix multiplications instead of one, and therefore should be roughly twice as expensive as a forward pass with a single shared perturbation when the multiplications are done in sequence.\footnote{Depending on the efficiency of the underlying libraries, the overhead of sampling $\signMatOne$ and $\signMatTwo$ may be non-negligible. If this is an issue, these matrices may be reused between all mini-batches. In our experience, this does not cause any drop in performance.} However, note that the two matrix multiplications are independent and can be done in parallel; this incurs the same overhead as the local reparameterization trick \citep{kingma2015variational}.

A general rule of thumb for neural nets is that the backward pass requires roughly twice as many FLOPs as the forward pass.
This suggests that each update using flipout ought to be about twice as expensive as an update with a single shared perturbation (if the matrix multiplications are done sequentially); this is consistent with our experience. 

{\bf{Evolution strategies.}} \label{FlipES}
ES is a highly parallelizable algorithm; however, most ES systems are engineered to run on multi-core CPU machines and are not able to take full advantage of GPU parallelism. Flipout enables ES to run more efficiently on a GPU because it allows each worker to evaluate a batch of quasi-independent perturbations rather than only a single perturbation.
To apply flipout to ES, we can simply replicate the starting state by the number of flipout perturbations $N$, at each worker.
Instead of Eqn.~\ref{eqn:ES}, the update rule using $M$ workers becomes:
\begin{align} \label{12}
\meanWeights_{t+1} &= 
\meanWeights_t + \alpha \frac{1}{MN\sigma^2} \sum_{m=1}^M \sum^{N}_{n=1}F_{mn}\left\{\sharedPerturbation_m \circ {\signVecOne_{mn}}{\signVecTwo_{mn}^\top}\right\}
\end{align}
where $m$ indexes workers, $n$ indexes the examples in a worker's batch, and $F_{mn}$ is the reward evaluated with the $n^{th}$ perturbation at worker $m$. Hence, each worker is able to evaluate multiple perturbations as a batch, allowing for parallelism on a GPU architecture.

\subsection{Variance analysis} \label{3.3}
\label{sec:variance-decomp}
In this section, we analyze the variance of stochastic gradients with and without flipout.
We show that flipout is guaranteed to reduce the variance of the gradient estimates compared to using na{\"\i}ve shared perturbations.

Let $\mathcal{G}_x = \mathcal{G}(x, \perturbation) = \frac{\partial}{\partial \theta_i} \mathcal{L}(y,f(x,\meanWeights,\perturbation))$ denote one entry of the stochastic gradient $\nabla_{\distParams} \mathcal{L}(y,f(x,\meanWeights,\perturbation))$ under the perturbation $\perturbation$ for a single training example $x$. (Note that $\mathcal{G}_x$ is a random variable which depends on both $x$ and $\perturbation$. We analyze a single entry of the gradient so that we can work with scalar-valued variances.) 
We denote the gradient averaged over a mini-batch as the random variable $\noisyBatchGrad = \frac{1}{\batchSize} \sum_{n=1}^\batchSize \mathcal{G}(x_n, \perturbation_n)$, where $\batch = \{x_n\}_{n=1}^\batchSize$ denotes a mini-batch of size $\batchSize$, and $\perturbation_n$ denotes the perturbation for the $n^{th}$ example. (The randomness comes from both the choice of $\batch$ and the random perturbations.) For simplicity, we assume that the $x_n$ are sampled i.i.d.~from the data distribution.

Using the Law of Total Variance, we decompose $\Var (\noisyBatchGrad)$ into a data term (the variance of the exact mini-batch gradients) and an estimation term (the estimation variance for a fixed mini-batch):
\begin{equation} 
\Var \left(\noisyBatchGrad\right) = \underbrace{\Var_{\batch} \left(\E_{\perturbation}\big[\noisyBatchGrad \given \batch \big]\right)}_{\text{data}} + \underbrace{\E_{\batch} \left[\Var_{\perturbation}\big(\noisyBatchGrad \given \batch \big)\right]}_{\text{estimation}}.
\label{eqn:data-estimation-terms}
\end{equation}

Notice that the data term decays with $N$ while the estimation term may not, due to its dependence on the shared perturbation. But we can break the estimation term into two parts for which we can analyze the dependence on $N$. To do this, we reformulate the standard shared perturbation scheme as follows: $\perturbation$ is generated by first sampling $\sharedPerturbation$ and then multiplying it by a random sign matrix $rs^\transpose$ as in Eqn.~\ref{eqn:flipoutnoise} --- exactly like flipout, except that the sign matrix is shared by the whole mini-batch. According to Observation~\ref{thm:identical-distribution}, this yields an identical distribution for $\perturbation$ to the standard shared perturbation scheme. Based on this, we obtain the following decomposition:

\begin{theorem}[Variance Decomposition Theorem]
\label{theorem}
Define $\alpha$, $\beta$, and $\gamma$ to be
\begin{align}
\alpha &= \Var_{x} \left(\E_{\perturbation}\big[\mathcal{G}_x \given x \big]\right) + \E_{x} \left[ \Var_{\perturbation}(\mathcal{G}_x \given x) \right] \\
\beta &= \expect_{x, x^\prime, \sharedPerturbation} \left[ \Cov_{\perturbation} (\mathcal{G}_x, \mathcal{G}_{x^\prime} \given x, x^\prime, \sharedPerturbation) \right] \\
\gamma &= \expect_{x, x^\prime} \left[ \Cov_{\sharedPerturbation} \left( \expect_{\perturbation}[\mathcal{G}_x \given x, \sharedPerturbation], \expect_{\perturbation}[\mathcal{G}_{x^\prime} \given x^\prime, \sharedPerturbation] \given x, x^\prime \right) \right]
\end{align}
Under the assumptions of Observation~\ref{thm:identical-distribution}, the variance of the gradients under shared perturbations and flipout perturbations can be written in terms of $\alpha$, $\beta$, and $\gamma$ as follows:
\begin{align} 
\text{Fully independent perturbations:} & \Var(\noisyBatchGrad) = \frac{1}{\batchSize} \alpha \\
\text{Shared perturbation: } &
\Var (\noisyBatchGrad) = \frac{1}{\batchSize} \alpha + \frac{\batchSize-1}{\batchSize} (\beta + \gamma)
\\
\text{Flipout: } &
\Var (\noisyBatchGrad) = \frac{1}{\batchSize}\alpha + \frac{\batchSize-1}{\batchSize} \gamma
\label{eqn:thm2}
\end{align}
\end{theorem}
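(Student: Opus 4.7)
The plan is to expand the variance as a double sum of covariances,
\[
\Var(\noisyBatchGrad) = \frac{1}{\batchSize^2} \sum_{n=1}^\batchSize \sum_{m=1}^\batchSize \Cov\!\left(\mathcal{G}(x_n, \perturbation_n),\, \mathcal{G}(x_m, \perturbation_m)\right),
\]
and handle the diagonal ($n = m$) and off-diagonal ($n \neq m$) contributions separately. For the diagonal, Observation~\ref{thm:identical-distribution} guarantees that each $\mathcal{G}(x_n, \perturbation_n)$ has the same marginal distribution under all three perturbation schemes, so the law of total variance conditional on $x_n$ gives $\Var(\mathcal{G}(x_n, \perturbation_n)) = \Var_x(\E_\perturbation[\mathcal{G}_x \given x]) + \E_x[\Var_\perturbation(\mathcal{G}_x \given x)] = \alpha$. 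Summing the $\batchSize$ diagonal entries and dividing by $\batchSize^2$ contributes exactly the $\alpha/\batchSize$ term appearing in every formula. In the fully independent case, all off-diagonal terms vanish since both the data and the perturbations are independent across examples, completing the first identity.

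For the off-diagonal entries in the shared and flipout cases, I would apply the law of total covariance twice. First, conditioning on $(x_n, x_m)$, the outer covariance $\Cov_{x_n, x_m}(\E_\perturbation[\mathcal{G}_{x_n} \given x_n], \E_\perturbation[\mathcal{G}_{x_m} \given x_m])$ is zero because $x_n$ and $x_m$ are i.i.d., so each off-diagonal entry reduces to $\E_{x, x^\prime}[\Cov_\perturbation(\mathcal{G}_x, \mathcal{G}_{x^\prime} \given x, x^\prime)]$. Next, using the representation $\perturbation = \sharedPerturbation \circ \signVecOne \signVecTwo^\transpose$ from Observation~\ref{thm:identical-distribution} and applying the law of total covariance once more while conditioning on $\sharedPerturbation$, I obtain
\[
\Cov_\perturbation(\mathcal{G}_x, \mathcal{G}_{x^\prime} \given x, x^\prime) = \E_{\sharedPerturbation}\!\left[\Cov_\perturbation(\mathcal{G}_x, \mathcal{G}_{x^\prime} \given x, x^\prime, \sharedPerturbation)\right] + \Cov_{\sharedPerturbation}\!\left(\E_\perturbation[\mathcal{G}_x \given x, \sharedPerturbation],\, \E_\perturbation[\mathcal{G}_{x^\prime} \given x^\prime, \sharedPerturbation] \bigGiven x, x^\prime\right).
\]
Taking expectation over $x, x^\prime$ identifies the two terms as $\beta$ and $\gamma$, respectively. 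For the shared perturbation scheme, both gradients use the same sign matrix $\signVecOne \signVecTwo^\transpose$, so the full $\beta + \gamma$ sum is retained, and multiplying the $\batchSize(\batchSize-1)$ off-diagonal entries by $1/\batchSize^2$ gives the stated $(\batchSize-1)(\beta + \gamma)/\batchSize$.

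The flipout formula follows from the same decomposition, but with one crucial simplification: since flipout assigns independent sign vectors $(\signVecOne_n, \signVecTwo_n)$ to each example, the perturbations $\perturbation_n = \sharedPerturbation \circ \signVecOne_n \signVecTwo_n^\transpose$ and $\perturbation_m = \sharedPerturbation \circ \signVecOne_m \signVecTwo_m^\transpose$ are conditionally independent given $\sharedPerturbation$. Hence $\Cov_\perturbation(\mathcal{G}_x, \mathcal{G}_{x^\prime} \given x, x^\prime, \sharedPerturbation) = 0$, the $\beta$ contribution vanishes, and only $\gamma$ survives in each off-diagonal entry.

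The main obstacle I anticipate is not the algebraic bookkeeping but making the conditional independence argument for flipout airtight: one needs to verify that each $\mathcal{G}(x_n, \perturbation_n)$ is measurable with respect to $(x_n, \sharedPerturbation, \signVecOne_n, \signVecTwo_n)$ and that the sign vector pairs are mutually independent across examples and independent of $\sharedPerturbation$. Once these measurability and independence statements are in hand, the three variance expressions drop out directly from the nested total-variance and total-covariance decompositions.
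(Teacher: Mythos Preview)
Your proposal is correct and follows essentially the same strategy as the paper: expand $\Var(\noisyBatchGrad)$ as a double sum of covariances, identify the diagonal contribution as $\alpha/\batchSize$ via the law of total variance, and decompose each off-diagonal covariance by nested conditioning first on the data pair and then on $\sharedPerturbation$, invoking conditional independence of the sign vectors to kill the $\beta$ term under flipout. The only cosmetic difference is that the paper first applies the law of total variance conditioning on the whole mini-batch $\batch$ (splitting into ``data'' and ``estimation'' terms) before expanding the estimation term as a double sum, whereas you expand first and condition afterward; the two orderings are algebraically equivalent.
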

\begin{proof}
Details of the proof are provided in Appendix~\ref{proof}.
\end{proof}

We can interpret $\alpha$, $\beta$, and $\gamma$ as follows. First, $\alpha$ combines the data term from Eqn.~\ref{eqn:data-estimation-terms} with the expected estimation variance for individual data points. This corresponds to the variance of the gradients on individual training examples, so fully independent perturbations yield a total variance of $\alpha/\batchSize$. The other terms, $\beta$ and $\gamma$, reflect the covariance between the estimation errors on different training examples as a result of the shared perturbations. The term $\beta$ reflects the covariance that results from sampling $r$ and $s$, so it is eliminated by flipout, which samples these vectors independently. Finally, $\gamma$ reflects the covariance that results from sampling $\sharedPerturbation$, which flipout does not eliminate.

Empirically, for all the neural networks we investigated, we found that $\alpha \gg \beta \gg \gamma$. This implies the following behavior for $\Var (\noisyBatchGrad)$ as a function of $\batchSize$: for small $\batchSize$, the data term $\alpha/\batchSize$ dominates, giving a $1/\batchSize$ variance reduction; with shared perturbations, once $\batchSize$ is large enough that $\alpha/\batchSize < \beta$, the variance $\Var (\noisyBatchGrad)$ levels off to $\beta$. However, flipout continues to enjoy a $1/\batchSize$ variance reduction in this regime. In principle, flipout's variance should level off at the point where $\alpha/\batchSize < \gamma$, but in all of our experiments, $\gamma$ was small enough that this never occurred: flipout's variance was approximately $\alpha/\batchSize$ throughout the entire range of $\batchSize$ values we explored, just as if the perturbations were sampled fully independently for every training example.

\section{Experiments}
\label{experiments}
\label{sec:experiments}
We first verified empirically the variance reduction effect of flipout predicted by Theorem~\ref{theorem}; we measured the variance of the gradients under different perturbations for a wide variety of neural network architectures and batch sizes. In Section~\ref{languagemodels}, we show that flipout applied to Gaussian perturbations and DropConnect is effective at regularizing LSTM networks.
In Section \ref{largebatch}, we demonstrate that flipout converges faster than shared perturbations when training with large mini-batches.
Finally, in Section \ref{es} we present experiments combining Evolution Strategies with flipout in both supervised learning and reinforcement learning tasks.

\begin{table}[b]
\begin{center}
\begin{tabular}{ccc}
\hline
\textbf{Name} & \textbf{Network Type}               & \textbf{Data Set} \\ \hline
ConvLe         &(Shallow) Convolutional           & MNIST \citep{lecun1998gradient}\\
ConVGG         &(Deep) Convolutional              & CIFAR-10 \citep{krizhevsky2009learning}\\
FC         &Fully Connected                   & MNIST \\
LSTM         &LSTM Network                     & Penn Treebank \citep{marcus1993building} \\
\hline
\end{tabular}
\caption{Network Configurations}
\label{table:configurations}
\end{center}
\end{table}

In our experiments, we consider the four architectures shown in Table~\ref{table:configurations} (details in Appendix \ref{expdetails}).

\subsection{variance reduction}
\label{varreduction}

Since the main effect of flipout is intended to be variance reduction of the gradients, we first estimated the gradient variances of several architectures with mini-batch sizes ranging from 1 to 8196 (Fig.~\ref{varianceplot}). We experimented with three perturbation methods: a single shared perturbation per mini-batch, the local reparameterization trick (LRT) of \cite{kingma2015variational}, and flipout.

For each of the FC, ConVGG, and LSTM architectures, we froze a partially trained network to use for all variance estimates, and we used multiplicative Gaussian perturbations with $\sigma^2=1$. We computed Monte Carlo estimates of the gradient variance, including both the data and estimation terms in Eqn.~\ref{eqn:data-estimation-terms}. Confidence intervals are based on 50 independent runs of the estimator.
Details are given in Appendix \ref{vardetail}.

The analysis in Section \ref{sec:variance-decomp} makes strong predictions about the shapes of the curves in Fig.~\ref{varianceplot}.
By Theorem~\ref{theorem}, the variance curves for flipout and shared perturbations each have the form $a + b/N$, where $N$ is the mini-batch size. On a log-log plot, this functional form appears as a linear regime with slope -1, a constant regime, and a smooth phase transition in between. Also, because the distribution of individual gradients is identical with and without flipout, the curves must agree for $N=1$. Our plots are consistent with both of these predictions. We observe that for shared perturbations, the phase transition consistently occurs for mini-batch sizes somewhere between 100 and 1000. In contrast, flipout gives the ideal linear variance reduction throughout the range of mini-batch sizes we investigated, i.e., its behavior is indistinguishable from fully independent perturbations.

\begin{figure*}[t!]
    \vspace{-.4in}
    \centering
    \begin{subfigure}[b]{0.30\textwidth}   
        \includegraphics[scale=0.33]{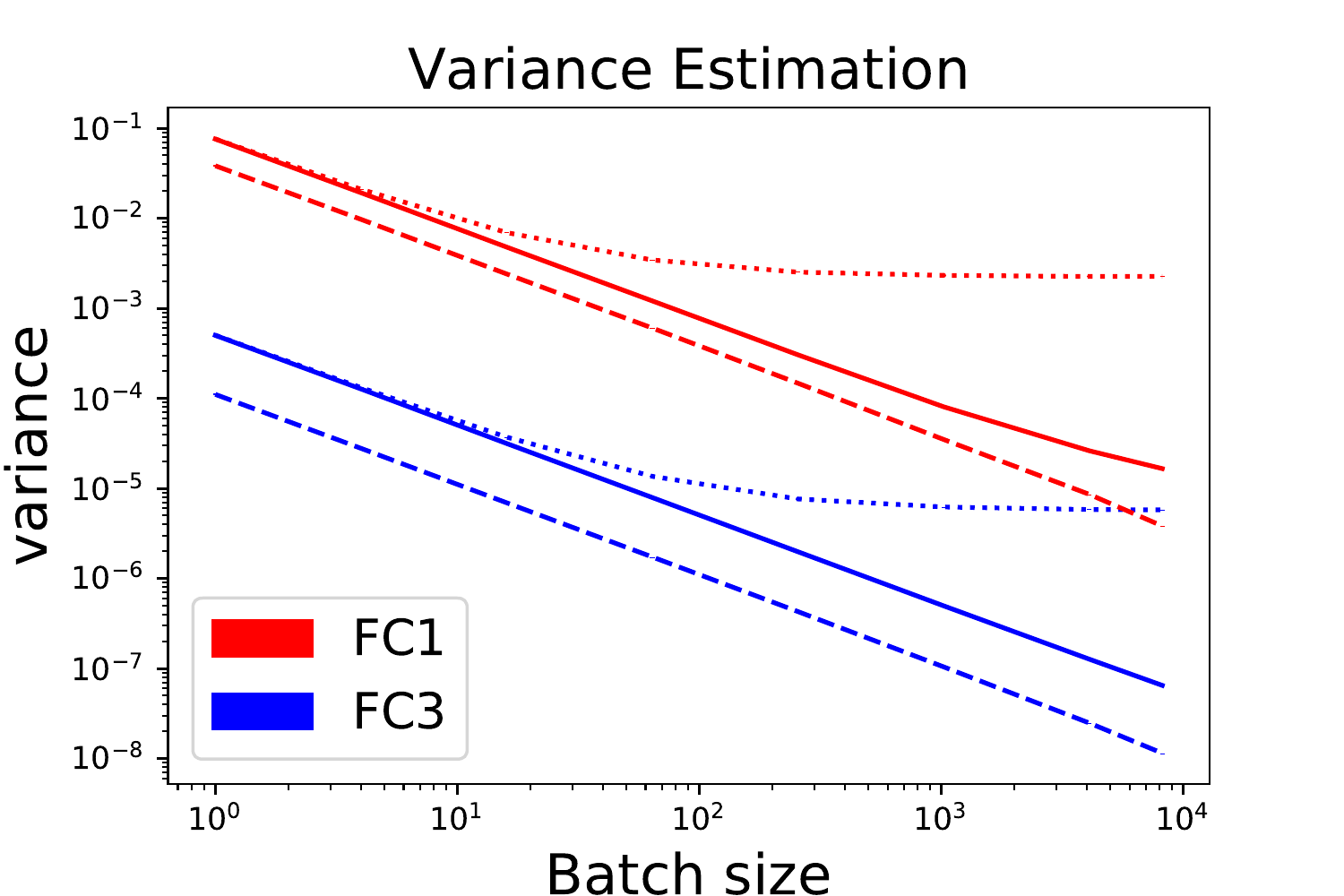}
        \caption[]%
        {{\small Fully-connected Net (FC)}}    
        \label{VRB1}
    \end{subfigure}
    \quad
    \begin{subfigure}[b]{0.30\textwidth}  
        \includegraphics[scale=0.33]{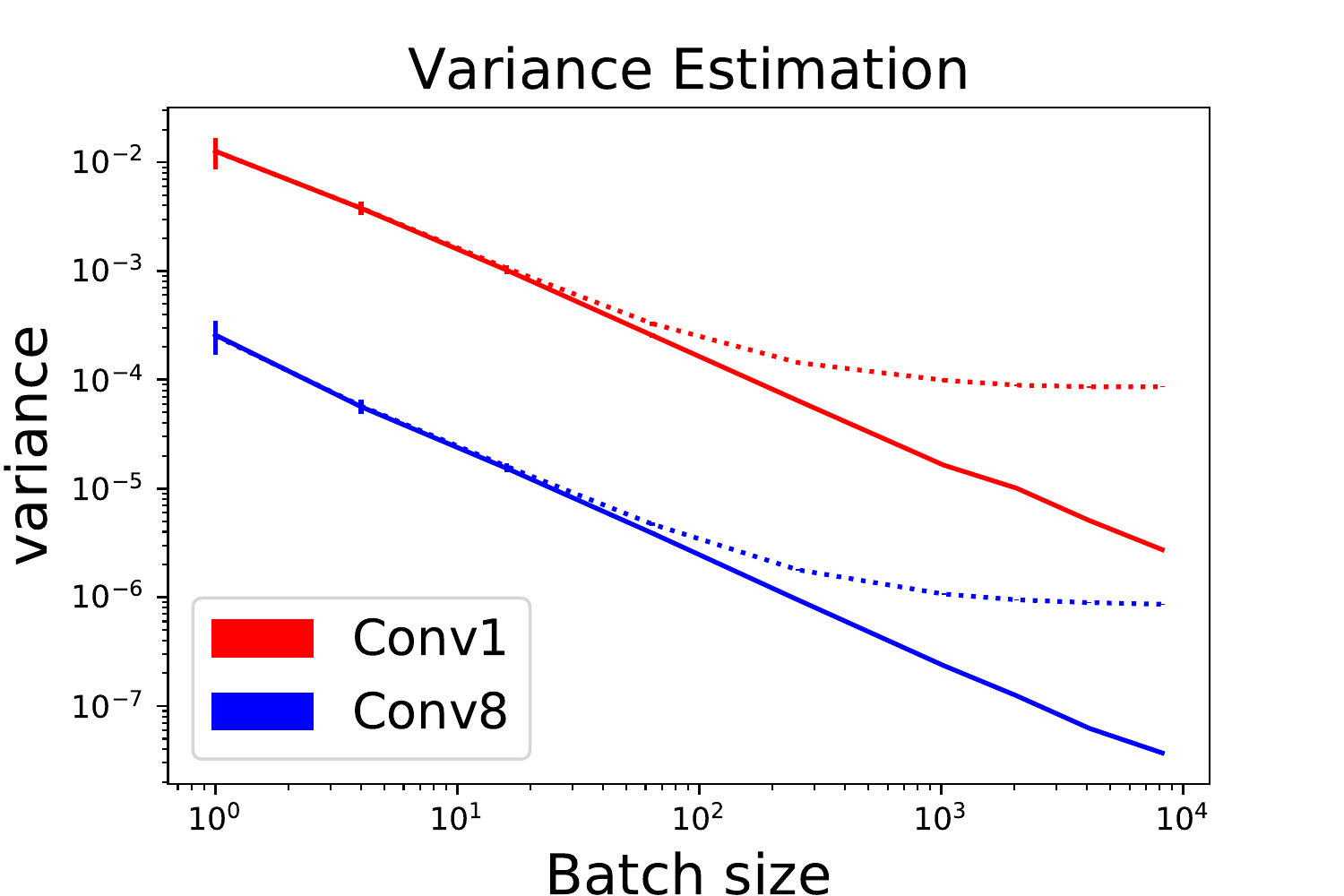}
        \caption[]%
        {{\small Convolutional Net (conVGG)}}
        \label{VRC2}
    \end{subfigure}
    \quad
    \vspace{0.4cm}
    \begin{subfigure}[b]{0.30\textwidth}   
        \includegraphics[scale=0.33]{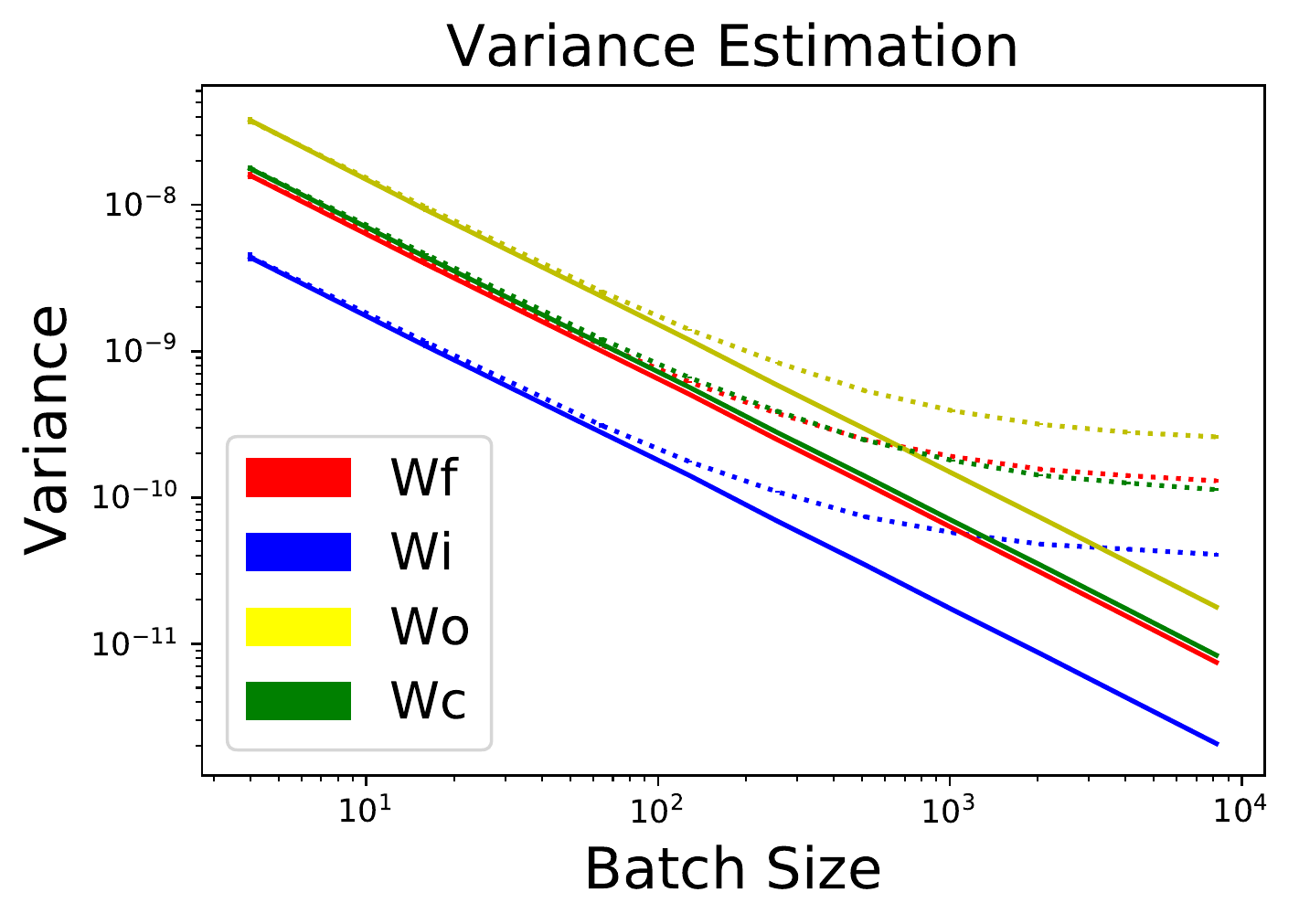}
        \caption[]%
        {{\small LSTM}}    
        \label{VRR1}
    \end{subfigure}
    \vspace{-3mm}
    \caption[]
    {\small Empirical variance of gradients with respect to mini-batch size for several architectures. 
    {\bf (a)} FC on MNIST; FC1 denotes the first layer of the FC network.
    {\bf (b)} ConVGG on CIFAR-10; Conv1 denotes the first convolutional layer.
    {\bf (c)} LSTM on Penn Treebank; the variance is shown for the hidden-to-hidden weight matrices in the first LSTM layer: $W_f$, $W_i$, $W_o$, and $W_c$ are the weights for the \textit{forget, input} and \textit{output} gates, and the \textit{candidate cell} update, respectively.
    {\bf Dotted:} shared perturbations. {\bf Solid:} flipout.
    {\bf Dashed:} LRT.
    }
    \label{varianceplot}
\end{figure*}

As analyzed by \citet{kingma2015variational}, the LRT gradients are fully independent within a mini-batch, and are therefore guaranteed to achieve the ideal $1/N$ variance reduction. Furthermore, they reduce the variance below that of explicit weight perturbations, so we would expect them to achieve smaller variance than flipout, as shown in Fig.~\ref{VRB1}.
However, flipout is applicable to a wider variety of architectures, including convolutional nets and RNNs.

\subsection{Regularization for Language Modeling}
\label{languagemodels}

We evaluated the regularization effect of flipout on the character-level and word-level language modeling tasks with the Penn Treebank corpus (PTB) \citep{marcus1993building}. We compared flipout to several other methods for regularizing RNNs: na\"{i}ve dropout \citep{zaremba2014recurrent}, variational dropout \citep{gal2016theoretically}, recurrent dropout \citep{DBLP:conf/SemeniutaSB16/2016}, zoneout \citep{DBLP:journals/corr/KruegerMKPBKGBL16}, and DropConnect \citep{merity2017regularizing}. \citet{zaremba2014recurrent} apply dropout only to the feed-forward connections of an RNN (to the input, output, and connections between layers). The other methods regularize the recurrent connections as well: \citet{DBLP:conf/SemeniutaSB16/2016} apply dropout to the cell update vector, with masks sampled either per step or per sequence; \citet{gal2016theoretically} apply dropout to the forward and recurrent connections, with all dropout masks sampled per sequence. \citet{merity2017regularizing} use DropConnect to regularize the hidden-to-hidden weight matrices, with a single DropConnect mask shared between examples in a mini-batch. We denote their model WD (for weight-dropped LSTM).

\textbf{Character-Level.} For our character-level experiments, we used a single-layer LSTM with 1000 hidden units (4.26M total parameters).
We trained each model on non-overlapping sequences of 100 characters in batches of size 32, using the AMSGrad variant of Adam~\citep{onconvergenceadam2018} with learning rate 0.002.
We perform early stopping based on validation performance.
Here, we applied flipout to the hidden-to-hidden weight matrix. More hyperparameter details are given in Appendix~\ref{lstmappendix}.
The results, measured in bits-per-character (BPC) for the validation and test sequences of PTB, are shown in Table~\ref{table:charresults}. In the table, shared perturbations and flipout (with Gaussian noise sampling) are denoted by Mult. Gauss and Mult. Gauss + Flipout, respectively.
We also compare to RBN (recurrent batchnorm) \citep{cooijmans2017recurrent} and H-LSTM+LN (HyperLSTM + LayerNorm) \citep{ha2016hypernetworks}.
Mult. Gauss + Flipout outperforms the other methods, and achieves the best reported results for this architecture.

\textbf{Word-Level.}
For our word-level experiments, we used a 2-layer LSTM with 650 hidden units per layer and 650-dimensional word embeddings (19.8M total parameters). We trained on sequences of length 35 in batches of size 40, for 100 epochs. We used SGD with initial learning rate 30, and decayed the learning rate by a factor of 4 based on the nonmonotonic criterion introduced by \citet{merity2017regularizing}. We used flipout to implement DropConnect, as described in Section \ref{sec:weightperturb}, and call this WD+Flipout. We applied WD+Flipout to the hidden-to-hidden weight matrices for recurrent regularization, and used the same hyperparameters as \citet{merity2017regularizing}. We used embedding dropout (setting rows of the embedding matrix to 0) with probability 0.1 for all regularized models except Gal, where we used probability 0.2 as specified in their paper. More hyperparameter details are given in Appendix \ref{lstmappendix}.
We show in Table~\ref{table:wordresults} that WD+Flipout outperforms the other methods with respect to both validation and test perplexity. In Appendix~\ref{sub:largelstm}, we show that WD+Flipout yields significant variance reduction for large mini-batches, and that when training with batches of size 8192, it converges faster than WD.

\begin{table}
\vspace{-0.5in}
\centering
\setlength\tabcolsep{4pt}
\begin{minipage}{0.48\textwidth}
\vspace{5mm}
\centering
\begin{tabular}{lcc}

\hline
\multicolumn{1}{c}{\textbf{Model}} & \textbf{Valid} & \textbf{Test}     \\ \hline
Unregularized LSTM                 & 1.468           & 1.423 \\
Semeniuta (2016)                   & 1.337           & 1.300 \\
Zoneout (2016)                     & 1.306           & 1.270 \\
Gal (2016)                         & 1.277           & 1.245 \\
\hline
Mult.~Gauss ($\sigma = 1$) (ours)  & 1.257           & 1.230 \\
Mult.~Gauss + Flipout      (ours)  & \textbf{1.256}  & \textbf{1.227} \\
\hline
RBN (2017)                         &   –             & 1.32  \\
H-LSTM + LN (2016)                 & 1.281           & 1.250 \\
\hline

\end{tabular}
\caption{\small Bits-per-character (BPC) for the character-level PTB task.
The RBN and H-LSTM+LN results are from the respective papers.
All other results are from our own experiments.
}
\label{table:charresults} 
\end{minipage}
\hfill
\begin{minipage}{0.48\textwidth}

\centering

\vspace{0.15in}
\begin{tabular}{lcc}
\hline
\multicolumn{1}{c}{\textbf{Model}}        & \textbf{Valid}     & \textbf{Test}    \\ \hline
Unregularized LSTM                        & 132.23             & 128.97 \\
Zaremba (2014)                            & 80.40              & 76.81  \\
Semeniuta (2016)                          & 81.91              & 77.88  \\  % per-step
Gal (2016)                                & 78.24              & 75.39  \\
Zoneout (2016)                            & 78.66              & 75.45  \\
WD (2017)                                 & 78.82              & 75.71  \\
\hline
WD + Flipout (ours)                       & \textbf{76.88}     & \textbf{73.20}  \\
\hline
\end{tabular}

\caption{\small Perplexity on the PTB word-level validation and test sets.
All results are from our own experiments.}

\label{table:wordresults}
\end{minipage}
\end{table}

\subsection{large batch training with flipout}
\label{largebatch}
\vspace{-0.1cm}
Theorem \ref{theorem} and Fig.~\ref{varianceplot} suggest that the variance reduction effect of flipout is more pronounced in the large mini-batch regime.
In this section, we train a Bayesian neural network with mini-batches of size 8192 and show that flipout speeds up training in terms of the number of iterations.

We trained the FC and ConvLe networks from Section~\ref{varreduction} using Bayes by Backprop \citep{blundell2015weight}. Since our primary focus is optimization, we focus on the training loss, shown in Fig.~\ref{LargeBBB}: for FC, we compare flipout with shared perturbations and the LRT; for ConvLe, we compare only to shared perturbations since the LRT does not give an unbiased gradient estimator. We found that flipout converged in about 3 times fewer iterations than shared perturbations for both models, while achieving comparable performance to the LRT for the FC model.
Because flipout is roughly twice as expensive as shared perturbations (see Section \ref{sec3.1}), this corresponds to a 1.5x speedup overall. 
Curves for the training and test error are given in Appendix \ref{sub:large}.
\subsection{evolution strategies}
\label{es}
\vspace{-0.1cm}
ES typically runs on multiple CPU cores.
The challenge in making ES GPU-friendly is that each sample requires computing a separate weight perturbation, so traditionally each worker can only generate one sample at a time. In Section \ref{FlipES}, we showed that ES with flipout allows each worker to evaluate a batch of perturbations, which can be done efficiently on a GPU. However, flipout induces correlations between the samples, so we investigated whether these correlations cause a slowdown in training relative to fully independent perturbations (which we term ``IdealES''). In this section, we show empirically that flipout ES is just as sample-efficient as IdealES, and consequently one can obtain significantly higher throughput per unit cost using flipout ES on a GPU.

The ES gradient defined in Eqn.~\ref{eqn:ES} has high variance, so a large number of samples are generally needed before applying an update. We found that 5,000 samples are needed to achieve stable performance in the supervised learning tasks. Standard ES runs the forward pass 5,000 times with independent weight perturbations, which sees little benefit to using a GPU over a CPU. FlipES allows the same number of samples to be evaluated using a much smaller number of explicit perturbations. Throughout the experiments, we ran flipout with mini-batches of size $40$ (i.e.~$N=40$ in Eqn.~\ref{12}).

We compared IdealES and FlipES with a fully connected network (FC) on the MNIST dataset.
Fig.~\ref{psedofully} shows that we incur no loss in performance when using pseudo-independent noise. Next, we compared FlipES and cpuES (using 40 CPU cores) in terms of the per-update time with respect to the model size. The result (in Appendix \ref{sub:cpuvsflip}) shows that FlipES scales better because it runs on the GPU. Finally, we compared FlipES and the backpropagation algorithm on both FC and ConvLe.
Fig.~\ref{bpesfc} and Fig.~\ref{bpesconv} show that FlipES achieves data efficiency comparable with the backpropagation algorithm. 
IdealES has a much higher computational cost than backpropagation, due to the large number of forward passes. FlipES narrows the computational gap between them. Although ES is more expensive than backpropagation, it can be applied to models which are not fully differentiable, such as models with a discrete loss (e.g., accuracy or BLEU score) or with stochastic units.

\setlength\belowcaptionskip{-3ex}
\begin{figure*}[t!]
    \vspace{-0.4in}
    \centering
    \begin{subfigure}[b]{0.475\textwidth}
        \centering
        \includegraphics[scale=0.48]{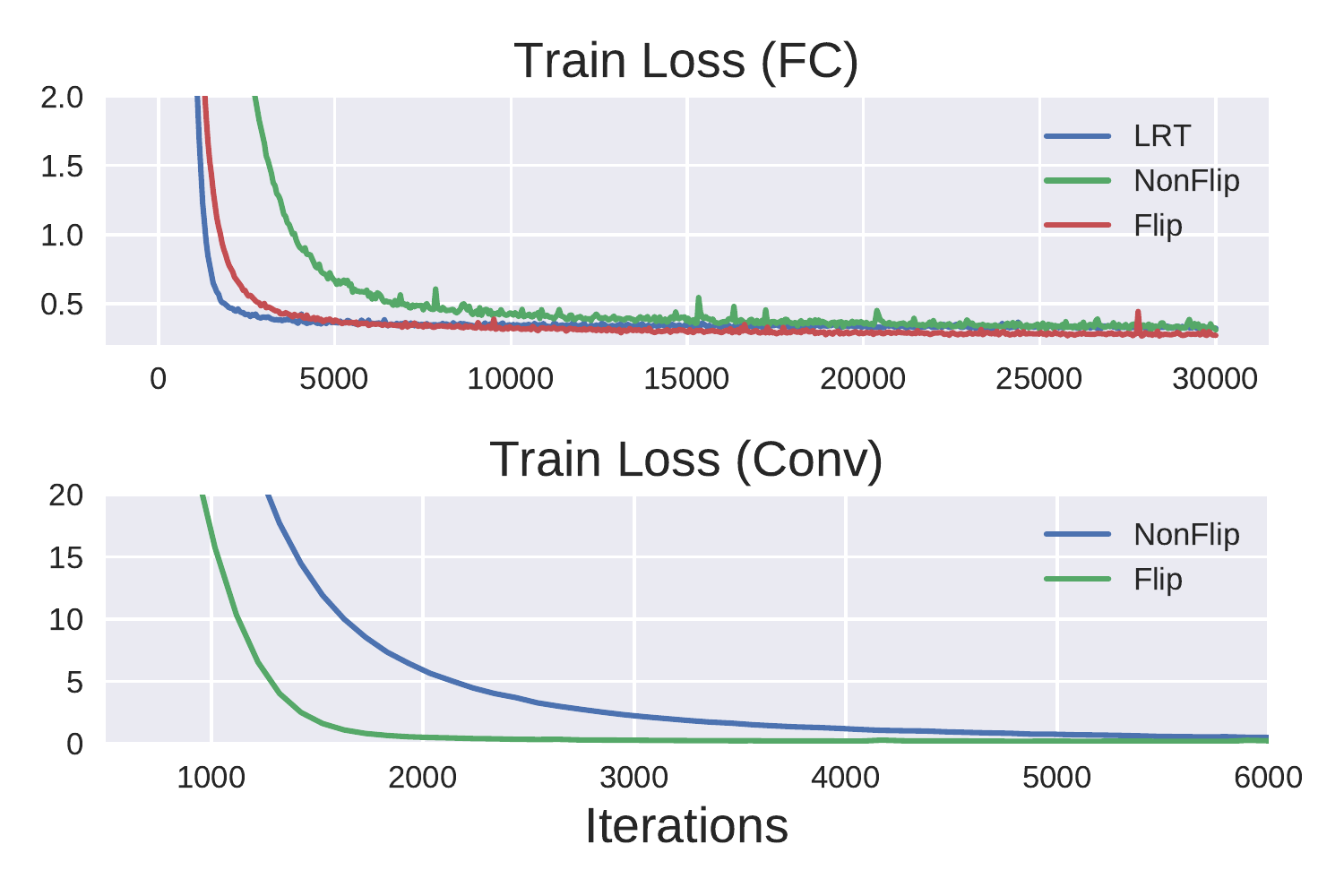}
        \vspace{-1.5\baselineskip}
        \caption[]%
        {{\small Large Batch Training w/ Bayes by Backprop}}    
        \label{LargeBBB}
    \end{subfigure}
    \hfill
    \begin{subfigure}[b]{0.475\textwidth}  
        \centering 
        \includegraphics[scale=0.48]{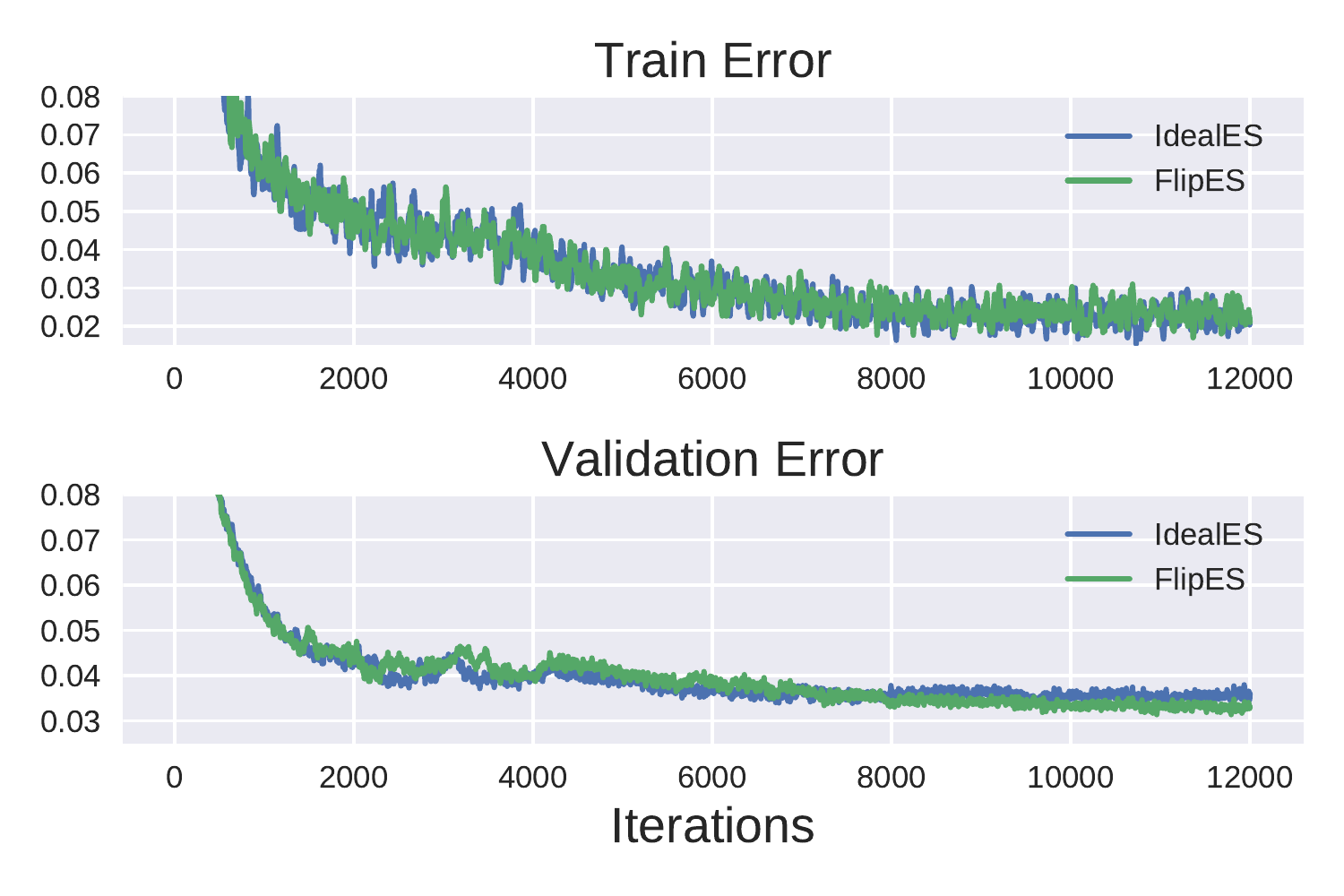}
        \vspace{-1.5\baselineskip}
        \caption[]%
        {{\small Flipout vs.~Fully Independent Perturbations}}    
        \label{psedofully}
    \end{subfigure}
    \vskip1.5\baselineskip
    \begin{subfigure}[b]{0.475\textwidth}   
        \centering 
        \includegraphics[scale=0.48]{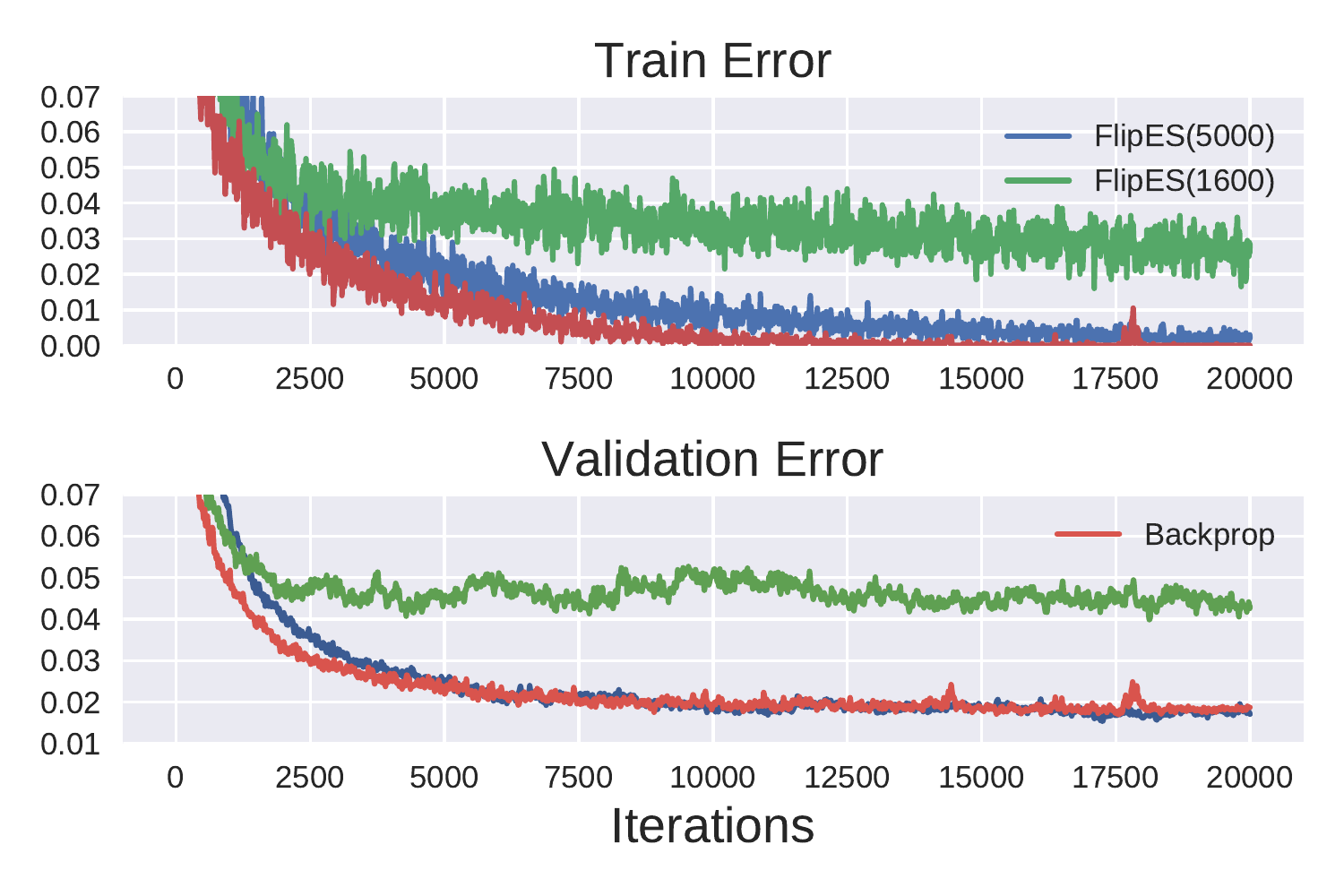}
        \vspace{-1.5\baselineskip}
        \caption[]%
        {{\small Backprop vs. FlipES (FC)}}    
        \label{bpesfc}
    \end{subfigure}
    \quad
    \begin{subfigure}[b]{0.475\textwidth}   
        \centering
        \includegraphics[scale=0.48]{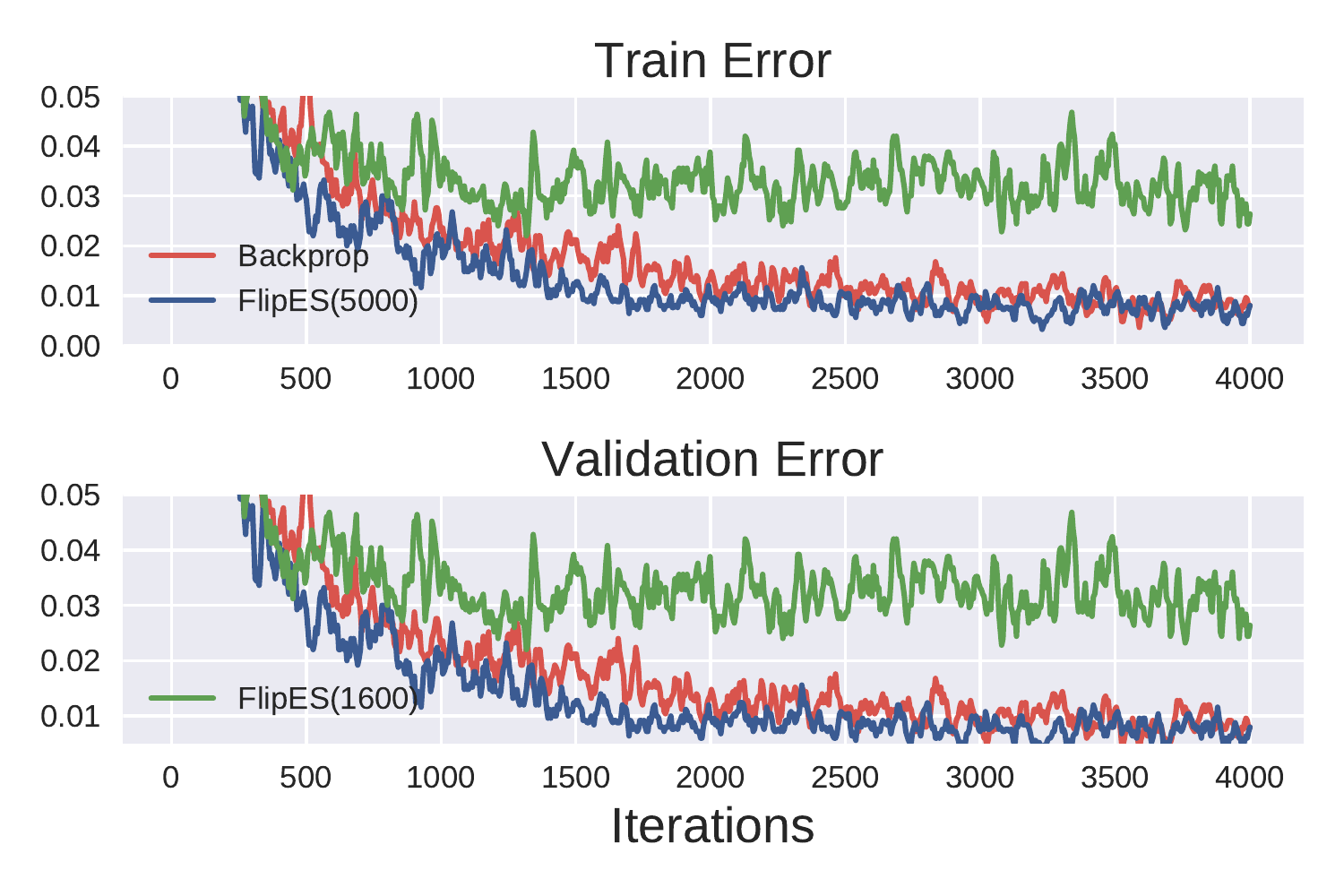}
        \vspace{-1.5\baselineskip}
        \caption[]%
        {{\small Backprop vs. FlipES (ConvLe)}}    
        \label{bpesconv}
    \end{subfigure}
    \vspace{+0.1in}
    \caption[]
    {\small
    Large batch training and ES. {\bf a)} Training loss per iteration using Bayes By Backprop with batch size $8192$ on the FC and ConvLe networks.
    {\bf b)} Error rate of the FC network on MNIST using ES with 1,600 samples per update;
    there is no drop in performance compared to ideal ES.
    {\bf c)} Error rate of FC on MNIST, comparing FlipES (with either 5,000 or 1,600 samples per update) with backpropagation. (This figure does not imply that FlipES is more efficient than backprop; FlipES was around 60 times more expensive than backprop per update.)
    {\bf d)} The same as (c), except run on ConvLe.
    } 
    \label{ESexp}
    \vspace{5mm}
\end{figure*}

\section{Conclusions}
We have introduced flipout, an efficient method for decorrelating the weight gradients between different examples in a mini-batch. We showed that flipout is guaranteed to reduce the variance compared with shared perturbations. Empirically, we demonstrated significant variance reduction in the large batch setting for a variety of network architectures, as well as significant speedups in training time.
We showed that flipout outperforms dropout-based methods for regularizing LSTMs.
Flipout also makes it practical to apply GPUs to evolution strategies, resulting in substantially increased throughput for a given computational cost. We believe flipout will make weight perturbations practical in the large batch setting favored by modern accelerators such as Tensor Processing Units \citep{46078}.

\section*{Acknowledgments}

YW was supported by an NSERC USRA award, and PV was supported by a Connaught New Researcher Award. We thank David Duvenaud, Alex Graves, Geoffrey Hinton, and Matthew D.~Hoffman for helpful discussions.

\newpage
\bibliography{iclr2018_conference}
\bibliographystyle{iclr2018_conference}

\newpage
\appendix
\section{proof of Theorem \ref{theorem}} \label{proof}
In this section, we provide the proof of Theorem \ref{theorem} (Variance Decomposition Theorem). 

\begin{proof}
We use the notations from Section \ref{3.3}. 
Let $x,x'$ denote two training examples from the mini-batch $\batch$, and $\perturbation,\perturbation'$ denote the weight perturbations they received.
We begin with the decomposition into data and estimation terms (Eqn.~\ref{eqn:data-estimation-terms}), which we repeat here for convenience:
\begin{equation}
\Var \left(\noisyBatchGrad\right) = \underbrace{\Var_{\batch} \left(\E_{\perturbation}\big[\noisyBatchGrad \given \batch \big]\right)}_{\text{data}} + \underbrace{\E_{\batch} \left[\Var_{\perturbation}\big(\noisyBatchGrad \given \batch \big)\right]}_{\text{estimation}}.
\label{eqn:data-estimation-terms-repeat}
\end{equation}

The data term from Eqn.~\ref{eqn:data-estimation-terms-repeat} can be simplified:
\begin{align}
\Var_{\batch} \left(\E_{\perturbation}\big[\noisyBatchGrad \given \batch \big]\right)
&= 
\Var_{\batch} \left(\E_{\perturbation} \Big[ \frac{1}{\batchSize} \sum_{n=1}^\batchSize \mathcal{G}_{x_n} \bigGiven \batch \Big] \right) \nonumber \\
&= \Var_{\batch} \left( \frac{1}{\batchSize} \sum_{n=1}^\batchSize \E_{\perturbation}\big[\mathcal{G}_{x_n} \given x_n\big] \right) \nonumber \\
&= \frac{1}{N}\Var_{x} \left(\E_{\perturbation}\big[\mathcal{G}_x \given x\big]\right) 
\label{eqn:decompose-data-term}
\end{align}

We break the estimation term from Eqn.~\ref{eqn:data-estimation-terms-repeat} into variance and covariance terms:
\begin{align}
\E_{\batch} \left[\Var_{\perturbation}\big(\mathcal{G}_\batch \given \batch\big)\right] &= \E_{\batch} \left[\Var_{\perturbation}\left(\frac{1}{\batchSize} \sum_{n=1}^\batchSize \mathcal{G}_{x_n} \bigGiven x_n\right)\right] \nonumber \\
&= \frac{1}{\batchSize^2} \E_{\batch} \left[ \sum_{n=1}^\batchSize \sum_{n^\prime=1}^\batchSize \Cov_{\perturbation_n, \perturbation_{n^\prime}} \left( \mathcal{G}_{x_n}, \mathcal{G}_{x_{n^\prime}} \given x_n, x_{n^\prime} \right) \right] \nonumber \\
&= \frac{1}{\batchSize^2} \E_{\batch} \left[ \sum_{n=1}^\batchSize \Var_{\perturbation_n}(\mathcal{G}_{x_n} \given x_n) + \sum_{n \neq n^\prime} \Cov_{\perturbation_n, \perturbation_{n^\prime}} \left( \mathcal{G}_{x_n}, \mathcal{G}_{x_{n^\prime}} \given x_n, x_{n^\prime} \right) \right] \nonumber \\
&= \frac{1}{\batchSize} \E_x \left[ \Var_{\perturbation}(\mathcal{G}_x \given x) \right] + \frac{\batchSize-1}{\batchSize} \E_{x, x^\prime} \left[ \Cov_{\perturbation, \perturbation^\prime}(\mathcal{G}_x, \mathcal{G}_{x^\prime} \given x, x^\prime) \right] \label{eqn:decompose-estimation-term}
\end{align}
We now separately analyze the cases of fully independent perturbations, shared perturbations, and flipout.

{\bf Fully independent perturbations.} If the perturbations are fully independent, the second term in Eqn.~\ref{eqn:decompose-estimation-term} disappears. Hence, combining Eqns.~\ref{eqn:data-estimation-terms-repeat}, \ref{eqn:decompose-data-term}, and \ref{eqn:decompose-estimation-term}, we are left with
\begin{equation}
\Var(\mathcal{G}_\batch) = \frac{1}{N}\Var_{x} \left(\E_{\perturbation}\big[\mathcal{G}_x \given x\big]\right) + \frac{1}{\batchSize} \E_x \left[ \Var_{\perturbation}(\mathcal{G}_x \given x) \right],
\end{equation}
which is just $\alpha/\batchSize$.

{\bf Shared perturbations.} Recall that we reformulate the shared perturbations in terms of first sampling $\sharedPerturbation$, and then letting $\perturbation = \sharedPerturbation \circ rs^\transpose$, where $r$ and $s$ are random sign vectors shared by the whole batch. Using the Law of Total Variance, we break the second term in Eqn.~\ref{eqn:decompose-estimation-term} into a part that comes from sampling $\sharedPerturbation$ and a part that comes from sampling $r$ and $s$.
\begin{align}
\Cov_{\perturbation, \perturbation^\prime}(\mathcal{G}_x, \mathcal{G}_{x^\prime} \given x, x^\prime) &= \expect_{\sharedPerturbation} \left[ \Cov_{\perturbation, \perturbation^\prime}(\mathcal{G}_x, \mathcal{G}_{x^\prime} \given x, x^\prime, \sharedPerturbation) \bigGiven x, x^\prime \right] + \nonumber \\
&\phantom{=} +  \Cov_{\sharedPerturbation} \left( \expect_{\perturbation}[\mathcal{G}_x \given x, \sharedPerturbation], \expect_{\perturbation^\prime}[\mathcal{G}_{x^\prime} \given x^\prime, \sharedPerturbation] \bigGiven x, x^\prime \right) \label{eqn:decompose-estimation-covariance}
\end{align}
Since the perturbations are shared, $\perturbation^\prime = \perturbation$, so this can be simplified slightly to:
\begin{equation}
\expect_{\sharedPerturbation} \left[ \Cov_{\perturbation}(\mathcal{G}_x, \mathcal{G}_{x^\prime} \given x, x^\prime, \sharedPerturbation) \bigGiven x, x^\prime \right]  
+  \Cov_{\sharedPerturbation} \left( \expect_{\perturbation}[\mathcal{G}_x \given x, \sharedPerturbation], \expect_{\perturbation}[\mathcal{G}_{x^\prime} \given x^\prime, \sharedPerturbation] \bigGiven x, x^\prime \right) 
\end{equation}
Plugging these two terms into the second term of Eqn.~\ref{eqn:decompose-estimation-term} yields $\frac{\batchSize-1}{\batchSize}(\beta+\gamma)$, so putting this all together we get $\Var(\mathcal{G}_\batch) = \frac{1}{\batchSize} \alpha + \frac{\batchSize-1}{\batchSize}(\beta+\gamma)$.

{\bf Flipout.} Since the perturbations for different examples are independent conditioned on $\sharedPerturbation$, the first term of Eqn.~\ref{eqn:decompose-estimation-covariance} vanishes. However, the second term remains. Therefore, plugging this into Eqn.~\ref{eqn:decompose-estimation-term} and combining the result with Eqns.~\ref{eqn:data-estimation-terms-repeat} and \ref{eqn:decompose-data-term}, we are left with $\Var(\mathcal{G}_\batch) = \frac{1}{\batchSize} \alpha + \frac{\batchSize-1}{\batchSize}\gamma$.

\end{proof}

\section{Network configurations}
\label{expdetails}
Here, we provide details of the network configurations used for our experiments (Section \ref{experiments}).

The FC network is a 3-layer fully-connected network with 512-512-10 hidden units.

ConvLe is a LeNet-like network \citep{lecun1998gradient} where the first two layers are convolutional with $32$ and $64$ filters of size $[5,5]$, and use ReLU non-linearities.
A $2\times 2$ max pooling layer follows after each convolutional layer. Dimensionality reduction only takes place in the pooling layer; the stride for pooling is two and padding is used in the convolutional layers to keep the dimension. Two fully-connected layers with $1024$ and $10$ hidden units are used to produce the classification result.

ConVGG is based on the VGG16 network \citep{simonyan2014very}.
We modified the last fully connected layer to have 10 output dimensions for our experiments on CIFAR-10. We didn't use batch normalization for the variance reduction experiment since it introduces extra stochasticity.

The architectures used for the LSTM experiments are described in Section~\ref{languagemodels}.
The hyperparameters used for the language modelling experiments are provided in Appendix~\ref{lstmappendix}.

\section{Variance reduction experiment details} \label{vardetail}
Given a network architecture, we compute the empirical stochastic gradient update variance as follows.
We start with a moderately pre-trained model, such as a network with $85\%$ training accuracy on MNIST.
Without updating the parameters, we obtain the gradients of all the weights by performing a feed-forward pass, that includes sampling $\sharedPerturbation$, $\signMatOne$, and $\signMatTwo$, followed by backpropagation.
The gradient variance of each weight is computed by repeating this procedure 200 times in the experiments.
Let $\widetilde{\Var_{lj}}$ denote the estimate of the gradient variance of weight $j$ in layer $l$. We compute the gradient variance as follows:
\[
\widetilde{\Var\nolimits_{lj}} = \frac{1}{200} \sum_{i=1}^{200} (g_{lj}^i - \overline{g_{lj}})^2 \quad \text{where} \quad \overline{g_{lj}}=\frac{1}{200}\sum_{i=1}^{200} g_{lj}^i
\]
where $g_{lj}^i$ is the gradient received by weight $j$ in layer $l$.
We estimate the variance of the gradients in layer $l$ by averaging the variances of the weights in that layer, $\tilde{V} = \frac{1}{|J|}\sum_j \widetilde{\Var_{lj}}$.
In order to compute a confidence interval on the gradient variance estimate, we repeat the above procedure 50 times, yielding a sequence of average variance estimates, $\widetilde{V_1},\dots,\widetilde{V_{50}}$.
For Fig.~\ref{varianceplot}, we compute the $90\%$ confidence intervals of the variance estimates with a t-test.

For ConVGG, multiple GPUs were needed to run the variance reduction experiment with large mini-batch sizes (such as $4096$ and $8192$). In such cases, it is computationally efficient to generate independent weight perturbations on different GPUs.
However, since our aim was to understand the effects of variance reduction independent of implementation, we shared the base perturbation among all GPUs to produce the plot shown in Fig.~\ref{varianceplot}. 
We show in Appendix~\ref{plots} that flipout yields lower variance even when we sample independent perturbations on different GPUs.

For the LSTM variance reduction experiments, we used the two-layer LSTM described in Section~\ref{languagemodels}, trained for 3 epochs on the word-level Penn Treebank dataset.
For Fig.~\ref{varianceplot}, we split large mini-batches (size 128 and higher) into sub-batches of size 64; we sampled one base perturbation $\perturbation$ that was shared among all sub-batches, and we sampled independent $\signMatOne$ and $\signMatTwo$ matrices for each sub-batch.

\section{LSTM Regularization Experiment Details} \label{lstmappendix}

Long Short-Term Memory networks (LSTMs) are defined by the following equations:
\begin{align}
\mathbf{i}_t, \mathbf{f}_t, \mathbf{o}_t &= \sigma(\mathbf{W}_h \mathbf{h}_{t-1} + \mathbf{W}_x \mathbf{x}_t + \mathbf{b}) \\
\mathbf{g}_t &= \tanh(\mathbf{W}_g \mathbf{h}_{t-1} + \mathbf{U}_g \mathbf{x}_t + \mathbf{b}_g) \\
\mathbf{c}_t & = \mathbf{f}_t \circ \mathbf{c}_{t-1} + \mathbf{i}_t \circ \mathbf{g}_t \\
\mathbf{h}_t &= \mathbf{o}_t \circ \tanh(\mathbf{c}_t)
\end{align}

where $\mathbf{i}_t$, $\mathbf{f}_t$, and $\mathbf{o}_t$ are the input, forget, and output gates, respectively, $\mathbf{g}_t$ is the candidate update, and $\circ$ denotes elementwise multiplication. Naïve application of dropout on the hidden state of an LSTM is not effective, because it leads to significant memory loss over long sequences. Several approaches have been proposed to regularize the recurrent connections, based on applying dropout to specific terms in the LSTM equations.
\citet{DBLP:conf/SemeniutaSB16/2016} propose to drop the cell update vector, with a dropout mask $\mathbf{d_t}$ sampled either per-step or per-sequence: $\mathbf{c}_t = \mathbf{f}_t \circ \mathbf{c}_{t-1} + \mathbf{i}_t \circ (\mathbf{d_t} \circ \mathbf{g}_t)$. \citet{gal2016theoretically} apply dropout to the input and hidden state at each time step, $\mathbf{x}_t \circ \mathbf{d_x}$ and $\mathbf{h_{t-1}} \circ \mathbf{d_h}$, with dropout masks $\mathbf{d_x}$ and $\mathbf{d_h}$ sampled once per sequence (and repeated in each time step). \citet{DBLP:journals/corr/KruegerMKPBKGBL16} propose to \textit{zone out} units rather than dropping them; the hidden state and cell values are either stochastically updated or maintain their previous value: $\mathbf{c}_t = \mathbf{d_t^c} \circ \mathbf{c_{t-1}} + (1 - \mathbf{d_t^c}) \circ (\mathbf{f_t} \circ \mathbf{c_{t-1}} + \mathbf{i_t} \circ \mathbf{g_t})$ and $\mathbf{h}_t = \mathbf{d_t^h} \circ \mathbf{h_{t-1}} + (1 - \mathbf{d_t^h}) \circ (\mathbf{o_t} \circ \tanh(\mathbf{f_t} \circ \mathbf{c_{t-1}} + \mathbf{i_t} \circ \mathbf{g_t}))$, with zoneout masks $\mathbf{d_t^h}$ and $\mathbf{d_t^c}$ sampled per step.

\subsection{Hyperparameter Details}

For the word-level models (Table~\ref{table:wordresults}), we used gradient clipping threshold 0.25 and the following hyperparameters:
\begin{itemize}
    \item For \citet{gal2016theoretically}, we used variational dropout with the parameters given in their paper: 0.35 dropout probability on inputs and outputs, 0.2 hidden state dropout, and 0.2 embedding dropout.
    \item For \citet{DBLP:conf/SemeniutaSB16/2016}, we used 0.1 embedding dropout, 0.5 dropout on inputs and outputs, and 0.3 dropout on cell updates, with per-step mask sampling.
    \item For \citet{DBLP:journals/corr/KruegerMKPBKGBL16}, we used 0.1 embedding dropout, 0.5 dropout on inputs and outputs, and cell and hidden state zoneout probabilities of 0.25 and 0.025, respectively.
    \item For WD \citep{merity2017regularizing}, we used the parameters given in their paper: 0.1 embedding dropout, 0.4 dropout probability on inputs and outputs, and 0.3 dropout probability on the output between layers (the same masks are used for each step of a sequence). We use 0.5 probability for DropConnect applied to the hidden-to-hidden weight matrices.
    \item For WD+Flipout, we used the same parameters as \citet{merity2017regularizing}, given above, but we regularized the hidden-to-hidden weight matrices with the variant of flipout described in Section~\ref{sec:weightperturb}, which implements DropConnect with probability 0.5.
\end{itemize}

For the character-level models (Table~\ref{table:charresults}), we used orthogonal initialization for the LSTM weight matrices, gradient clipping threshold 1, and did not use input or output dropout. The input characters were represented as one-hot vectors. We used the following hyperparameters for each model:

\begin{itemize}
    \item For recurrent dropout \citep{DBLP:conf/SemeniutaSB16/2016}, we used 0.25 dropout probability on the cell state, and per-step mask sampling.
    \item For Zoneout \citep{DBLP:journals/corr/KruegerMKPBKGBL16}, we used 0.5 and 0.05 for the cell and hidden state zoneout probabilities, respectively.
    \item For the variational LSTM \citep{gal2016theoretically}, we used 0.25 hidden state dropout.
    \item For the flipout and shared perturbation LSTMs, we sampled Gaussian noise with $\sigma = 1$ for the hidden-to-hidden weight matrix.
\end{itemize}

\section{Additional Experiments}
\label{plots}

\subsection{Variance reduction}
As discussed in Appendix \ref{expdetails}, training on multiple GPUs naturally induces independent noise for each sub-batch.
Fig.~\ref{extravariance} shows that flipout still achieves lower variance than shared perturbations in such cases.
When estimating the variance with mini-batch size 8192, running on four GPUs naturally induces four independent noise samples, for each sub-batch of size 2048; this yields lower variance than using a single noise sample. Similarly, for mini-batch size 4096, two independent noise samples are generated on separate GPUs.

\begin{figure*}[h!]
    \centering 
    \includegraphics[scale=0.48]{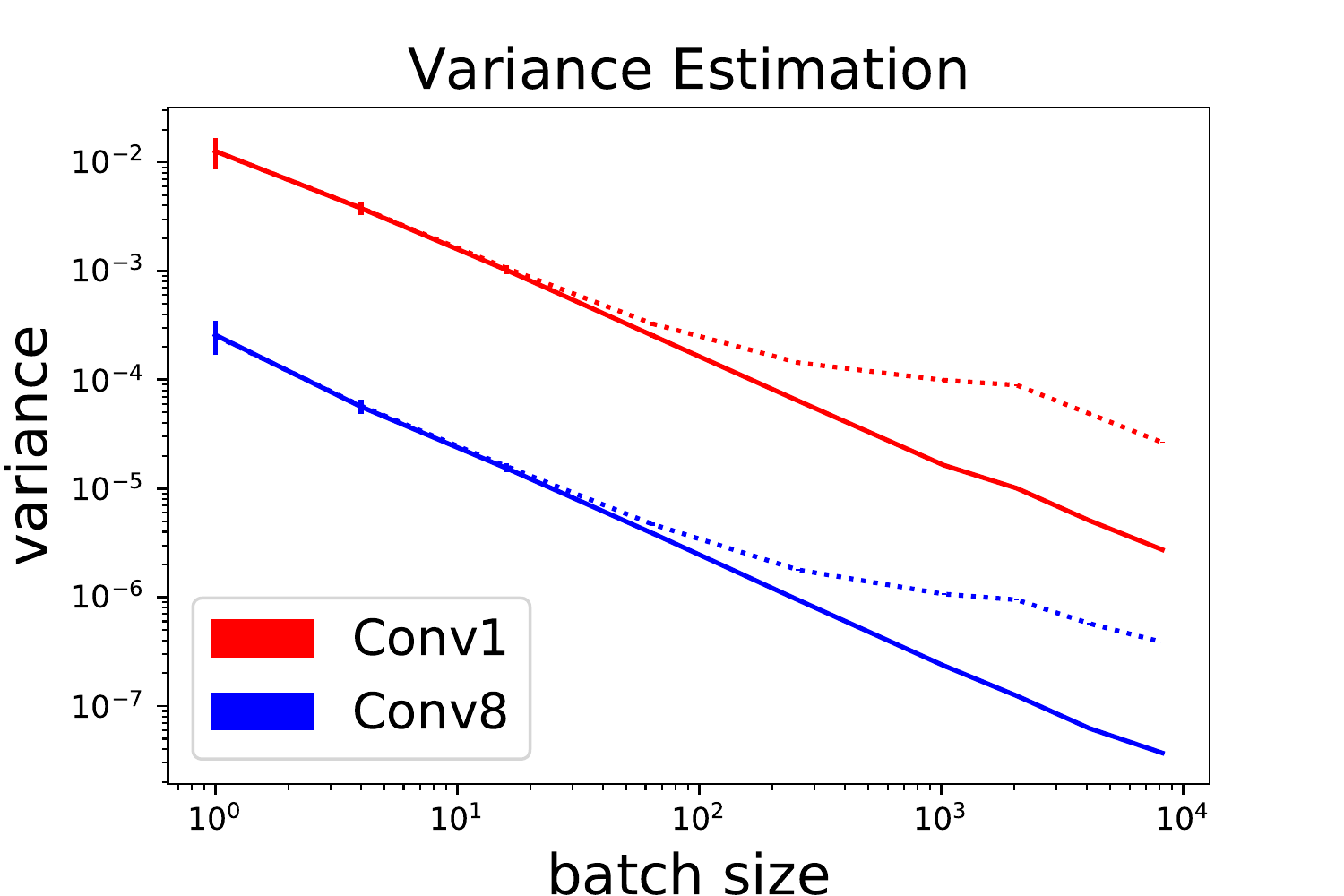}
    \caption[]
    {\small 
    Empirical variance of the gradients when training on multiple GPUs. \textbf{Solid:} flipout. \textbf{Dotted:} shared perturbations.
    }
    \label{extravariance}
\end{figure*}

\subsection{Large Batch Training with Flipout}
\label{sub:large}
Fig.~\ref{LargeBBBacc} shows the training and test error for the large mini-batch experiments described in Section~\ref{largebatch}.
For both FC and ConvLe networks, we used the Adam optimizer with learning rate $0.003$. We downscaled the KL term by a factor of 10 to achieve higher accuracy.

While Fig.~\ref{LargeBBB} shows that flipout converges faster than shared perturbations, Fig.~\ref{LargeBBBacc} shows that flipout has the same generalization ability as shared perturbations (the faster convergence doesn't result in overfitting).

\begin{figure*}[h!]
    \centering
    \begin{subfigure}[b]{0.49\textwidth}
        \centering
        \includegraphics[scale=0.47]{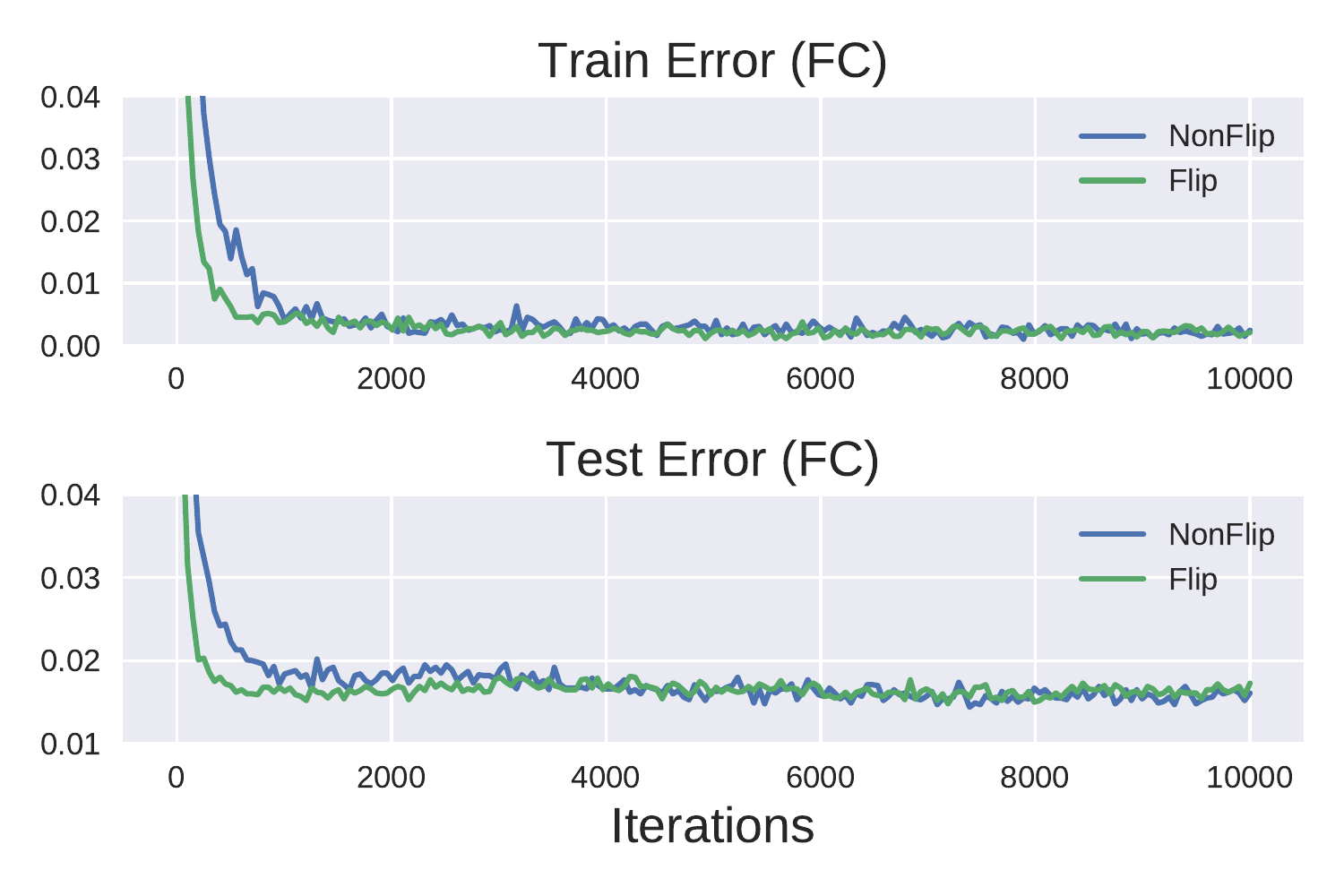}
        \label{LargeBBBfcacc}
    \end{subfigure}
    \begin{subfigure}[b]{0.49\textwidth}  
        \centering 
        \includegraphics[scale=0.47]{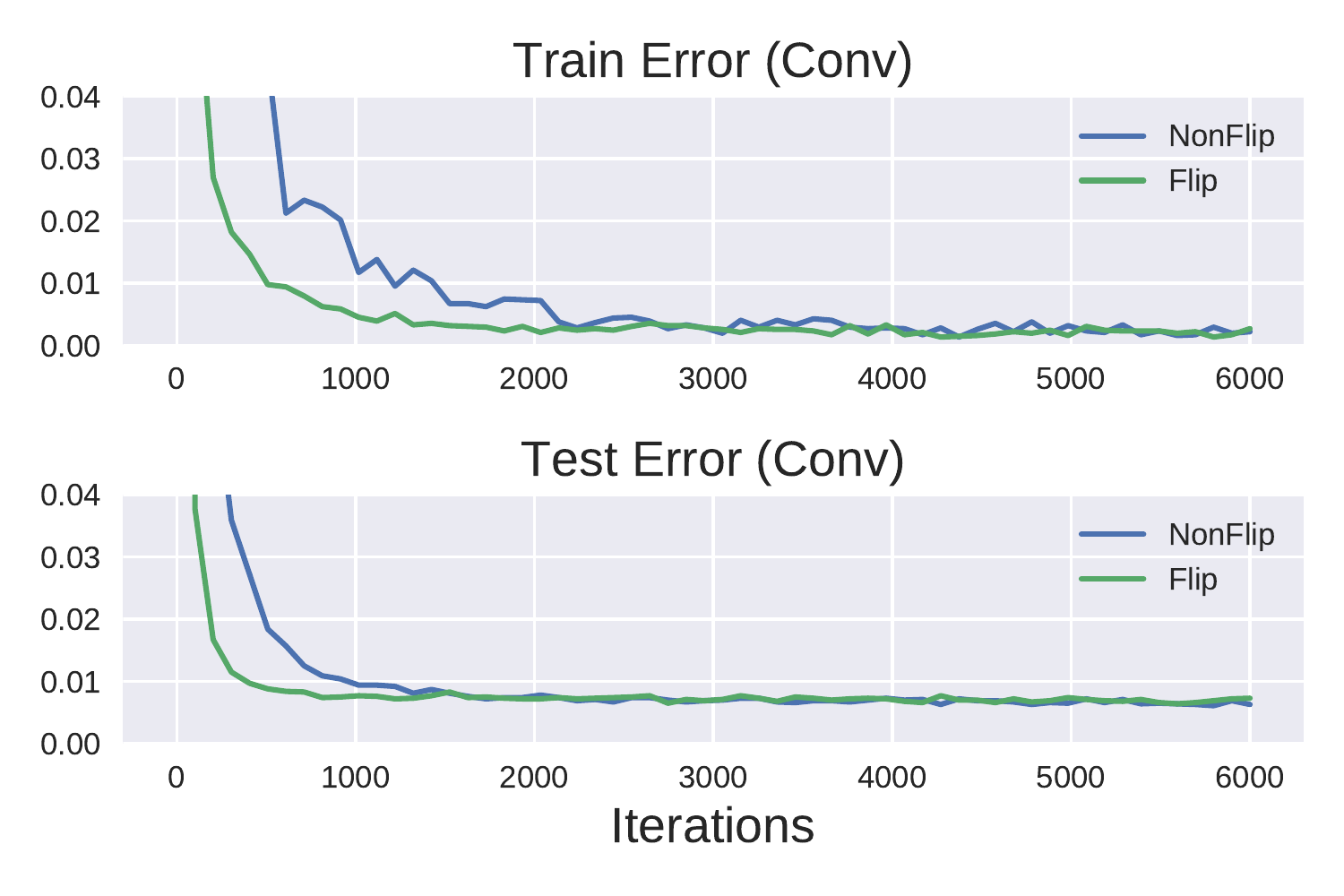}
        \label{LargeBBBconvacc}
    \end{subfigure}
    \vspace{-0.2in}
    \caption[]
    {\small \textbf{Left:} The training and test errors obtained by training the FC network on large mini-batches (size 8192) with Bayes by Backprop. \textbf{Right:} The training and test errors obtained with ConvLe in the same setting, with mini-batch size 8192.} 
    \label{LargeBBBacc}
\end{figure*}

\subsection{FlipES v.s. cpuES}
\label{sub:cpuvsflip}

Fig.~\ref{cpu} shows that the computational cost of cpuES increases as the model size increases, while FlipES scales better because it runs on the GPU.

\begin{figure*}[h!]
\centering
\begin{minipage}[t]{.3\textwidth}
\centering
\vspace{0pt}
\includegraphics[scale=0.48]{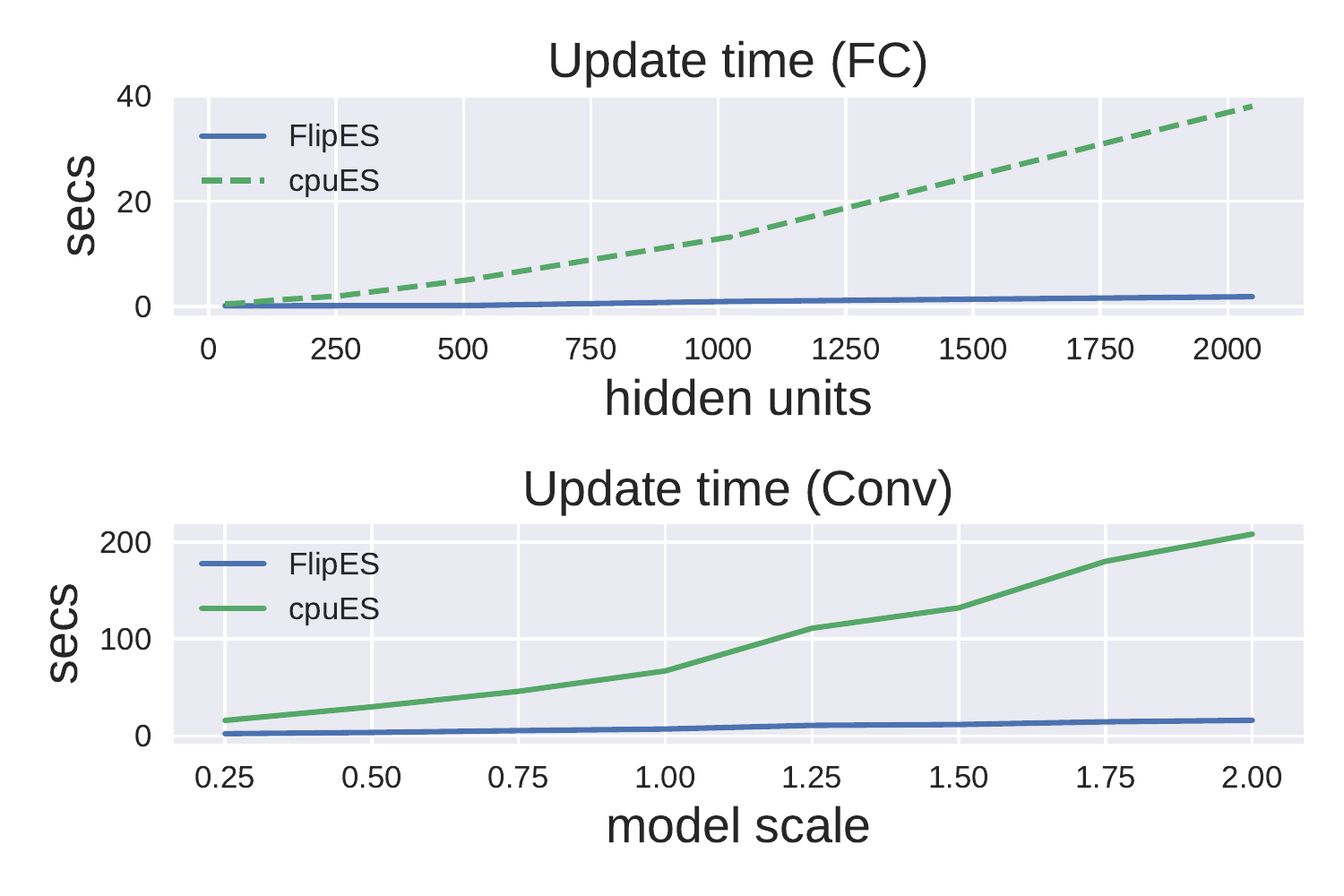}
\end{minipage}\hfill
\begin{minipage}[t]{.4\textwidth}
\centering
\vspace{0pt}
\resizebox{\textwidth}{!}{%
\begin{tabular}{ccc}
\textbf{\# Hidden Units} & \textbf{FlipES} & \textbf{cpuES} \\ \hline
32              & 0.12s  & 0.51s \\
128             & 0.13s  & 1.22s \\
512             & 0.18s  & 5.10s \\
2048            & 1.86s  & 38.0s \\
                &        &       \\
\textbf{\# Filters}      & \textbf{FlipES} & \textbf{cpuES}       \\ \hline
0.25            & 2.3s   & 16s   \\
0.75            & 5.48s  & 46s   \\
1.0             & 7.12s  & 67s   \\
1.5             & 11.77s & 132s  \\
\end{tabular}%
}
\end{minipage}
\vspace{-0.1in}
\caption{Per-update time comparison between FlipES and 40-core cpuES (5,000 samples) w.r.t. the model size. We scale the FC network by modifying the number of hidden units, and we scale the Conv network by modifying the number of filters (1.0 stands for $32$ filters in the first convolutional layer and $64$ filters for the second one).}
\label{cpu}
\end{figure*}

\subsection{Large Batch LSTM training}
\label{sub:largelstm}

The variance reduction offered by flipout allows us to use DropConnect~\citep{wan2013regularization} efficiently in a large mini-batch setting. Here, we use flipout to implement DropConnect as described in Section~\ref{sec:weightperturb}, and use it to regularize an LSTM word-level language model. We used the LSTM architecture proposed by~\cite{merity2017regularizing}, which has 400-dimensional word embedddings and three layers with hidden dimension 1150. Following~\cite{merity2017regularizing}, we tied the weights of the embedding layer and the decoder layer. ~\cite{merity2017regularizing} use DropConnect to regularize the hidden-to-hidden weight matrices, with a single mask shared for all examples in a batch. We used flipout to achieve a different DropConnect mask per example. We applied \textbf{WD+Flipout} to both the hidden-to-hidden (h2h) and input-to-hidden (i2h) weight matrices, and compared to the model from~\cite{merity2017regularizing}, which we call \textbf{WD} (for \textit{weight-dropped} LSTM), with DropConnect applied to both h2h and i2h. Both models use embedding dropout 0.1, output dropout 0.4, and have DropConnect probability 0.5 for the i2h and h2h weights. Both models were trained using Adam with learning rate 0.001.

Fig.~\ref{fig:awddropvariance} compares the variance of the gradients of the first-layer hidden-to-hidden weights between WD and WD+Flipout, and shows that flipout achieves significant variance reduction for mini-batch sizes larger than 256.
Fig.~\ref{fig:largebatchlstm} shows the training curves of both models with batch size 8192. We see that WD+Flipout converges faster than WD, and achieves a lower training perplexity, showcasing the optimization benefits of flipout in large mini-batch settings.

\begin{figure}[h]
\centering
\begin{minipage}[t]{.47\textwidth}
  \centering
  \includegraphics[width=\linewidth]{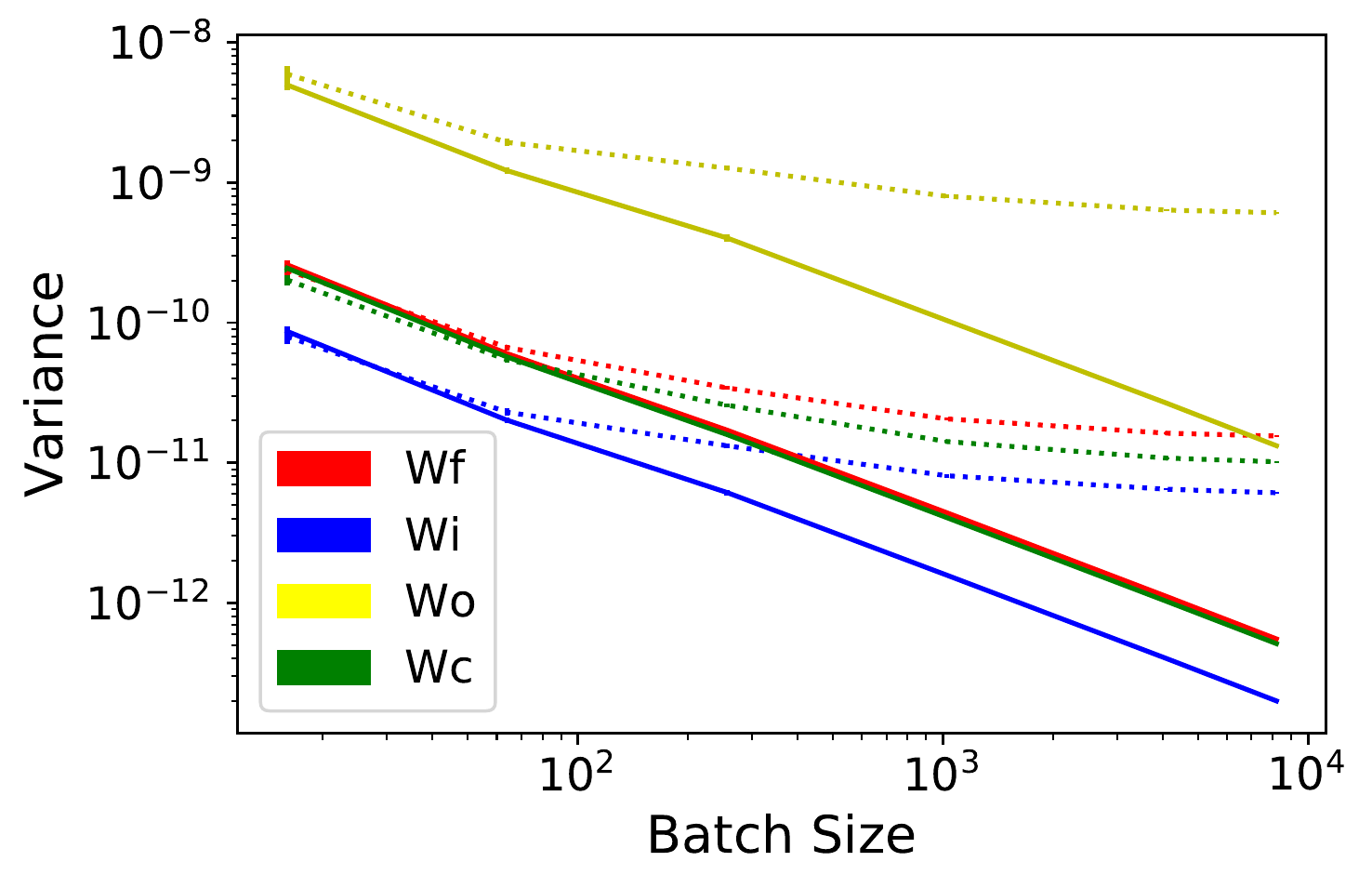}
  \captionof{figure}{{\small The variance reduction offered by flipout compared to the WD model~\citep{merity2017regularizing}.  Solid lines represent WD+Flipout, while dotted lines represent WD. The variance is shown for the hidden-to-hidden weight matrices in the first layer: $W_f$, $W_i$, $W_o$, and $W_c$ are the weights for the \textit{forget, input} and \textit{output} gates, and the \textit{candidate cell} update, respectively.}}
  \label{fig:awddropvariance}
\end{minipage}%
\qquad
\begin{minipage}[t]{.47\textwidth}
  \centering
  \includegraphics[width=\linewidth]{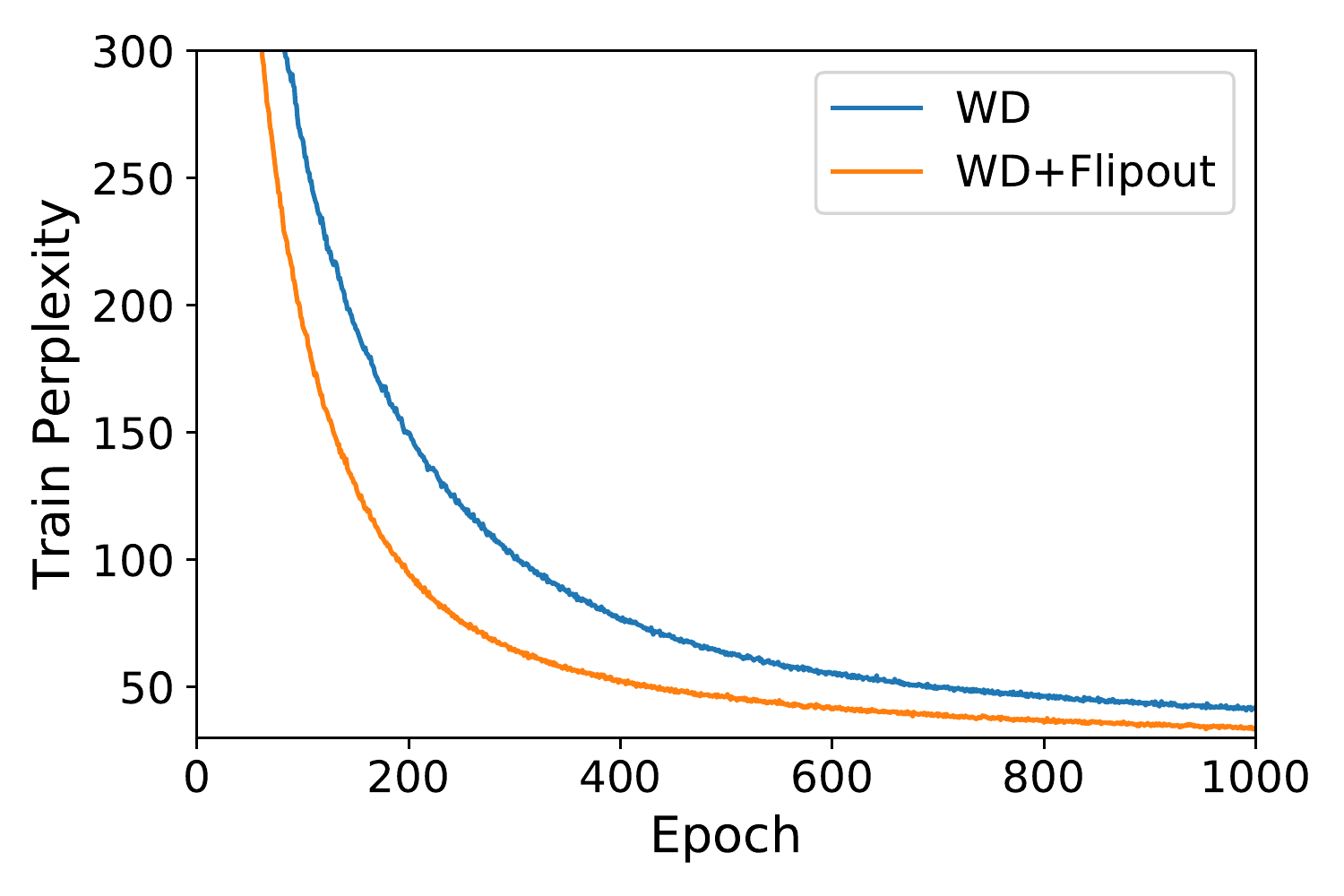}
  \captionof{figure}{\small Training curves for WD and WD+Flipout, with batch size 8192.}
  \label{fig:largebatchlstm}
\end{minipage}
\end{figure}

\end{document}